\def\BibTeX{{\rm B\kern-.05em{\sc i\kern-.025em b}\kern-.08em
    T\kern-.1667em\lower.7ex\hbox{E}\kern-.125emX}}
\newtheorem{theorem}{Theorem}
\newtheorem{lemma}{Lemma}
\newcommand{\ocap}{\textcircled{$\scriptstyle{\cap}$}}
\newcommand{\ocup}{\textcircled{$\scriptstyle{\cup}$}}
\begin{document}

\title{A geometric approach to conditioning belief functions}

\author{Fabio Cuzzolin
\thanks{F. Cuzzolin is with the School of Engineering, Computing and Mathematics, Oxford Brookes University, Oxford, UK. Email: Fabio.Cuzzolin@brookes.ac.uk.}
}


\maketitle

\begin{abstract}
Conditioning is crucial in applied science when inference involving time series is involved. Belief calculus is an effective way of handling such inference in the presence of epistemic uncertainty -- unfortunately, different approaches to conditioning in the belief function framework have been proposed in the past, leaving the matter somewhat unsettled. 
Inspired by the geometric approach to uncertainty, in this paper we propose an approach to the conditioning of belief functions based on geometrically projecting them onto the simplex associated with the conditioning event in the space of all belief functions. We show here that such a geometric approach to conditioning often produces simple results with straightforward interpretations in terms of degrees of belief. This raises the question of whether classical approaches, such as for instance Dempster's conditioning, can also be reduced to some form of distance minimisation in a suitable space. The study of families of combination rules generated by (geometric) conditioning rules appears to be the natural prosecution of the presented research.
\end{abstract}

\section{Introduction} \label{sec:introduction}

Decision making and estimation are common problems in applied science, as people or machines need to make inferences about the state of the external world and take appropriate actions. Such state is typically assumed to be described by a probability distribution over a set of alternative hypotheses, which in turn needs to be inferred from the available data. Sometimes, however, as in the case of extremely rare events (e.g., a volcanic eruption), few samples are available to drive such inference. Part of the data can be missing. In addition, under the law of large numbers, probability distributions are the outcome of an infinite process of evidence accumulation whereas, in all practical cases, the available evidence can only provide some sort of constraint on the unknown, `true' probability governing the process.

Different kinds of constraints are associated with different generalisations of probabilities, formulated to model `epistemic' uncertainty at the level of probability distributions \cite{denoeux2020representations}. The simplest such generalisations are, arguably, interval probabilities \cite{decampos94} and convex sets of probabilities or `credal sets' \cite{levi80book}. A whole battery of different uncertainty theories \cite{walley91book} has been developed in the last century or so, starting from De Finetti's pioneering work \cite{definetti74}. In particular, Shafer's theory of belief functions (b.f.s) \cite{Shafer76}, based on A. Dempster's \cite{dempster67multivalued} seminal work, allows us to express partial belief by providing lower and upper bounds to probability values \cite{yager2008classic}. The widespread presence and influence of uncertainty at different levels explains why belief functions have been increasingly applied to fields as diverse as robotics \cite{reineking2016active}, economics, computer vision \cite{bloch2008defining,lian2018joint}, communications \cite{alshamaa2017tracking,senouci2014belief} or cybersecurity \cite{sun2006information}. Powerful tools for decision making \cite{strat94decisionanalysis,smets05ijar,troffaes07} and classification \cite{liu2019evidence} with belief functions have also been proposed -- see \cite{denoeux2019decision} for a recent review.

When observations come from time series, however, or when conditional independence assumptions are necessary to simplify the structure of the joint (belief) distribution to estimate, the need for generalising the classical results on total probability to belief calculus arises.
This is the case, for instance, in computer vision problems such as image segmentation \cite{minaee2021image}, where conditional independence is crucial to make large-scale optimisation problems tractable. In target tracking, conditional constraints on the targets' future positions given their past locations are available. If such constraints are described as belief functions \cite{cuzzolin00mtns,gennari02-integrating,cuzzolin05isipta,cuzzolin13fusion,gong2017belief}, predicting current target locations requires combining conditional belief functions into a single `total' belief function \cite{cuzzolin01thesis,Moral96,zhou2017total,dezert2018total,dezert2018total-fusion}, in a generalisation of the total probability theorem.


Unfortunately, in opposition to what happens in classical probability theory, different definitions of conditional belief functions \cite{kulasekere2004conditioning} can be imagined and have indeed been proposed in the past \cite{fagin91new,smets93jeffrey,suppes1977,Klawonn:1992:DBT:2074540.2074558,spies94conditional,klopotek99lncs,tang05dempster,lehrer05updating}. 
Quite recently, the idea of formulating the problem geometrically has emerged. Lehrer \cite{lehrer05updating}, in particular, has proposed such a geometric approach to determine the conditional expectation of non-additive probabilities (such as belief functions). 

The notion of generating conditional belief functions by minimising a suitable distance function between the original b.f. $b$ and the `conditioning region' $\mathcal{B}_A$ associated with the conditioning event $A$, i.e., the set of belief functions whose b.b.a. assigns mass to subsets of $A$ only
\begin{equation} \label{eq:geometric-conditioning}
b_d(.|A) = \arg \min_{b' \in \mathcal{B}_A} d(b,b'),
\end{equation}
naturally arises in the context of the geometric approach to uncertainty \cite{cuzzolin2021springer}. This original angle has a clear potential, as it expands our arsenal of approaches to the problem and, we argue here, is a promising candidate to the role of general framework for conditioning. 

\subsection{Contributions and paper outline}

In particular, in this paper we explore the geometric conditioning problem as defined above in the mass space representation 
in which belief functions are represented by the vectors of their mass values, 
and adopt distance measures of the classical Minkowski ($L_p$) family. 

We first recall in Section \ref{sec:related-work} the state of the art concerning evidence theory (Section \ref{sec:soa-evidence}), the conditioning of belief functions (Section \ref{sec:soa-conditioning}), the geometric approach to uncertainty (Section \ref{sec:soa-geometric-approach}) and the use of distances in evidence theory (Section \ref{sec:soa-distances}).

The core of the paper is Section \ref{sec:conditional-m}, in which our main results on conditioning belief functions in the mass space are derived.
We first recall in Section \ref{sec:geometric} the basic notions of the geometric approach to belief functions. In particular, we show how each b.f. can be represented by either the vector of its belief values in the belief space or the vector of its mass values in the mass space. 
The notion of geometric conditional belief function and the associated minimisation problem are formalised in Section \ref{sec:notion-conditional}.
In Sections \ref{sec:l1-m}, \ref{sec:l2-m} and \ref{sec:linf-m}, instead, we prove the analytical forms of the $L_1$, $L_2$ and $L_\infty$ conditional belief functions in the mass space, respectively. 

The results obtained are discussed in Section \ref{sec:discussion}, by summarising the main results (Section \ref{sec:discussion-summary}), highlighting the properties of geometric conditional belief functions (Section \ref{sec:discussion-properties}), hinting at an interesting connection with Lewis' imaging \cite{lewis76} generalized to b.f.s (Section \ref{sec:interpretation-imaging}) and comparing the results of geometrically conditioning belief functions in the mass as opposed to the belief space  (Section \ref{sec:discussion-comparison}).

Finally, in Section \ref{sec:perspectives} we prospect a number of future developments for the geometric approach to conditioning. 

\section{State of the art} \label{sec:related-work}

\subsection{Belief functions} \label{sec:soa-evidence}

A \emph{basic probability assignment} (b.p.a.) over a finite set \emph{(frame of discernment \cite{Shafer76})} $\Theta$ is a function $m_b : 2^\Theta\rightarrow [0,1]$ on its power set $2^\Theta = \{A \subseteq \Theta\}$ such that 
\[
m_b(\emptyset) = 0,  \quad
\sum_{A \subseteq \Theta} m_b(A) = 1. 
\]
Subsets of $\Theta$ associated with non-zero values of $m_b$ are called
\emph{focal elements}. 

The \emph{belief function} $b:2^\Theta\rightarrow [0,1]$ associated with a basic probability assignment $m_b$ on $\Theta$ is defined as
\begin{equation} \label{eq:belief-function}
b(A) = \sum_{B \subseteq A} m_b(B). 
\end{equation}
The \emph{core} $\mathcal{C}_b$ of a b.f. $b$ is the union of its focal elements. 

A dual mathematical representation of the evidence encoded by a belief function $b$ is the \emph{plausibility function} $pl_b : 2^{\Theta}  \rightarrow [0,1]$, $A \mapsto pl_b(A)$, where the plausibility value $pl_b(A)$ of an event $A$ is given by 
\[
pl_b(A) \doteq 1 - b({A}^c) = \sum_{B \cap A \neq \emptyset} m_b(B) 
\]
and expresses the amount of evidence \emph{not against} $A$.

The \emph{orthogonal sum} $b_1 \oplus b_2 : 2^\Theta \rightarrow [0,1]$ of two belief functions $b_1 : 2^\Theta \rightarrow [0,1]$, $b_2 : 2^\Theta \rightarrow [0,1]$ defined on the same frame $\Theta$ is the unique belief function on $\Theta$ whose focal elements are all the possible intersections $A = B \cap C$ of focal elements $B$ and $C$ of $b_1$ and $b_2$, respectively, and whose basic probability assignment is given by
\[
\displaystyle m_{b_1 \oplus b_2}(A) = \frac{ \displaystyle \sum_{B \cap C = A} m_{b_1}(B) m_{b_2} (C)} { \displaystyle 1 - \sum_{B \cap C = \emptyset} m_{b_1} (B) m_{b_2} (C)}.
\]

\subsection{Conditioning in belief calculus} \label{sec:soa-conditioning}

\subsubsection{Dempster conditioning}

The original proposal is due to Dempster himself \cite{dempster67multivalued}. He formulated it in his original model, in which belief functions are induced by multi-valued mappings $\Gamma : \Omega \rightarrow 2^\Theta$ of probability distributions defined on a set $\Omega$ onto the power set of another set (`frame') $\Theta$. 

Given a conditioning event $B \subset \Theta$, the `logical' (or `categorical', in Smets's terminology) belief function $b_B$ such that $m_b(B) = 1$ is combined via Dempster's rule with the a priori BF $b$. The resulting measure $b \oplus b_B$ is the {conditional belief function given $B$} \emph{\`a la Dempster}, which we denote by $b_\oplus(A|B)$. 

Dempster conditioning was almost immediately and strongly criticised from a Bayesian standpoint.
In response to these objections a number of approaches to conditioning in belief calculus have been proposed  \cite{kyburg87bayesian,Chateauneuf89,fagin91new,Jaffray92,GILBOA199333,Denneberg1994,yu94conditional,itoh95new,tang05dempster} along the years, in different mathematical setups. 

\subsubsection{Credal conditioning}

Fagin and Halpern, in particular, argued that Dempster conditioning behaves unreasonably in the context of the classical `three prisoners' example \cite{fagin91new}, originally discussed by Diaconis \cite{diaconis78review}.
They thus proposed a notion of conditional belief \cite{fagin91new} as the lower envelope of a family of conditional probability functions, and provided a closed-form expression for it. 
The notion, quite related to the concept of `inner measure', was considered by various other authors \cite{kyburg87bayesian}.

Namely, a b.f. $b$ can be interpreted as the lower envelope of a family $\mathcal{P}[b]$ of probability distributions, i.e.
\[
b (A) = \inf_{P \in \mathcal{P}[b]} P(A).
\]
Fagin and Halpern thus defined the conditional belief function associated with $b$ as the {lower envelope} (that is, the infimum) {of the family of conditional probability functions $P(A|B)$}, where $P$ is consistent with $b$:
\begin{equation} \label{eq:conditioning-fagin}
b_{\text{Cr}}(A|B) \doteq \inf_{P \in \mathcal{P}[b]} P(A|B), 
\quad 
pl_{\text{Cr}}(A|B) \doteq \sup_{P \in \mathcal{P}[b]} P(A|B).
\end{equation}

This `credal' conditioning is more conservative than Dempster conditioning, as the associated probability interval is included in that resulting from Dempster conditioning.

\subsubsection{Spies's definition}

In the (original) framework of multi-valued mappings, instead, Spies \cite{spies94conditional} defined conditional events as \emph{sets of equivalent events} under conditioning. By applying multi-valued mapping to such events, conditional belief functions can be introduced. An updating rule, which is equivalent to the law of total probability if all beliefs are probabilities, was also introduced. 

Importantly, in this definition a conditional belief function is \emph{not} a b.f. on the subalgebra of the subsets of the conditioning event $A$.
It can be proven that Spies's conditional belief functions are closed under Dempster's rule of combination and therefore coherent with the random-set interpretation of the theory.

In related work by Slobodova \cite{slobodova94conditional}, a multi-valued extension of conditional belief functions was introduced \cite{Slobodova97} and its properties examined. 

\subsubsection{Geometric conditioning}

Under the \emph{focusing} principle of belief revision \cite{dubois1992evidence,dubois1997focusing,dubois1993belief}, no new information is introduced as we merely focus on a specific subset of the original hypothesis space. When applied to belief functions, this yields Suppes and Zanotti's \emph{geometric conditioning} \cite{suppes1977}:
\begin{equation} \label{eq:geometric-conditioning-values}
b_\text{G}(A|B) = \frac{b(A \cap B)}{b(B)}, 
\quad 
pl_\text{G}(A|B) = \frac{b(B) - b(B \setminus A)}{b(B)}.
\end{equation}
This was proved by Smets using Pearl's `probability of provability' \cite{Pearl:1988:PRI:52121} interpretation of belief functions. 

It is interesting to note that geometric conditioning is somewhat dual to Dempster conditioning, as it amounts to {replacing probability values with belief values} in Bayes' rule:
\[
\begin{array}{ccc}
pl_\oplus(A|B) = \frac{\displaystyle pl_b (A \cap B)}{\displaystyle pl_b (B) } & \leftrightarrow & b_\text{G}(A|B) = \frac{\displaystyle b(A \cap B)}{\displaystyle b(B)}.
\end{array}
\]
Unlike Dempster and conjunctive conditioning (see the next paragraph), geometric conditioning does not seem to be linked to an underlying combination operator, although the issue remains open.

\subsubsection{Smets's conjunctive rule of conditioning} 

Another way of dealing with the classical Bayesian criticism of Dempster's rule is to abandon all notions of multivalued mapping and define belief directly on the power set of the frame as in Smets' Transferable Belief Model \cite{smets93belief} as in (\ref{eq:belief-function}). 
To signal the fact that in his TBM no multivalued mapping is required Smets called mass functions \emph{basic belief assignments} (b.b.a.s).

In particular, the conditional b.f. $b_U(B|A)$ with b.b.a. 
\[
m_U(B|A) = \sum_{C \subseteq A^c} m(B \cup C), \quad B \subseteq A
\]
turns out to be the minimal commitment specialisation of $b$ such that the plausibility of the complementary event $A^c$ is nil \cite{Klawonn:1992:DBT:2074540.2074558}. 

In \cite{smets93jeffrey}, Smets pointed out the distinction between \emph{revision} and \emph{focussing} in the conditional process, and the way they lead to unnormalized and geometric \cite{suppes1977} conditioning, respectively. In these two scenarios he proposed some generalizations of Jeffrey's rule of conditioning \cite{jeffrey65book,shafer81jeffrey} to belief calculus.

\subsubsection{Other work}

The topic of conditioning in evidence theory has been studied by a number of other authors \cite{gong2018low}.

Klopotek and Wierzchon \cite{klopotek99lncs}, for instance, have provided a frequency-based interpretation for conditional belief functions. Tang and Zheng \cite{tang05dempster} have discussed the issue of conditioning in a multidimensional space. Lehrer \cite{lehrer05updating} has proposed a geometric approach to determining the conditional expectation of non-additive probabilities. Such a conditional expectation can then be applied for updating, and to introduce a notion of independence.

Among the most recent work in the area, Meester and Kerkvliet have recently proposed to look at the issue from a different perspective \cite{meester2019new,meester2019infinite}, by redeveloping and rederiving various notions of conditional belief functions, using a relative frequencies stance similar to that of \cite{klopotek99lncs}. The authors call the two main forms of conditioning \emph{contingent} and \emph{necessary} conditioning, respectively.
\\
Matuszewski and Klopotek \cite{matuszewski2017does}, on their hand, have investigated the empirical nature of Dempster's rule of combination, providing an original interpretation of conditional belief functions as belief functions given the manipulation of the original empirical data.
Novel Jeffrey-like conditioning rules have been proposed in \cite{han2017evidence}, whereas
the efficient computation of belief-theoretic conditionals, in particular Dempster and Fagin-Halpern conditionals, has been investigated by Polpitiya et al. in \cite{polpitiya2017efficient,polpitiya2019linear}.

Both Dezert et al. \cite{dezert2018total} and Zhou and Cuzzolin \cite{zhou2017total} have recently approached the problem of generalising the total probability theorem for belief functions.
The former have proposed their own fomulation of the total belief theorem, and used it to derive formal expressions for conditional belief functions, whereas the latter have shown a proof of existence for total belief functions under the assumption that Dempster conditioning is used.
From the point of view of manipulating conditionals, a message passing algorithm \cite{nguyen2017approximate} that approximates belief
updating on evidential networks with conditional belief functions has been proposed by Nguyen.

The various approaches to conditioning have been recently reviewed by Coletti et al. \cite{coletti2017bayesian}, who considered the problem of Bayesian inference under imprecise prior information in the form of a conditional belief function.

\subsubsection{Conditioning: A summary}

Table \ref{table:conditioning} summarises the behaviour of the main conditioning operators (including Smets's \emph{conjunctive} $b_{\text{\ocap}} (\cdot|B)$ and \emph{disjunctive} $b_{\text{\ocup}} (\cdot|B)$ conditioning) in terms of the degrees of belief and plausibility of the resulting conditional belief function.

\begin{table}[ht!]
\setlength\extrarowheight{10pt}
\setlength{\tabcolsep}{5pt}
\begin{center} 
\caption{Belief and plausibility values of the outcomes of various conditioning operators \label{table:conditioning}}
\begin{tabular}{|c|c|c|}
\hline
\textbf{Operator} & \textbf{Belief value} & \textbf{Plausibility value}
\\
\hline
Dempster's $\oplus$ & 
\small{$\displaystyle \frac{pl_b (B) - pl_b (B \setminus A)}{pl_b (B)}$ } & 
\small{$\displaystyle \frac{pl_b (A \cap B)}{pl_b (B)}$} 
\\
\hline
Credal $Cr$ & 
\small{$\displaystyle \frac{b (A \cap B)}{b(A \cap B) + pl_b (\bar{A} \cap B)}$} & 
\small{$\displaystyle \frac{pl_b (A \cap B)}{pl_b (A \cap B) + b(\bar{A} \cap B)}$}
\\
\hline
Geometric $G$ & 
\small{$\displaystyle \frac{b(A \cap B)}{b(B)}$ } & 
\small{$\displaystyle \frac{b(B) - b(B \setminus A)}{b(B)}$} 
\\
\hline
Conjunctive $\text{\ocap}$ & 
\small{$b(A \cup \bar{B})$, $A \cap B \neq \emptyset$} & 
\small{$pl_b (A \cap {B})$, $A \not \supset B$} 
\\
\hline
Disjunctive $\text{\ocup}$ & 
\small{$b(A)$, $A \supset B$} & 
\small{$pl_b (A)$, $A \cap B = \emptyset$} 
\\
\hline
\end{tabular}
\end{center}
\end{table}

Conditioning operators form a nested family, from the most committal to the least:
\[
\begin{array}{cl}
b_{\text{\ocup}}(\cdot|B) & \leq b_\text{Cr}(\cdot|B) \leq 
b_\oplus(\cdot|B) \leq b_{\text{\ocap}}(\cdot|B) 
\\
& \leq pl_{\text{\ocap}}(\cdot|B) \leq pl_\oplus(\cdot|B)
\leq pl_\text{Cr}(\cdot|B) \leq pl_{\text{\ocup}}(\cdot|B).
\end{array}
\]

\subsection{Geometry approach to uncertainty}  \label{sec:soa-geometric-approach}

The geometry of set functions and other uncertainty measures has been studied by several authors \cite{black97geometric,danilov00,maass06philosophical}. 
Indeed, geometry may potentially be a unifying language for the field of uncertainty theory \cite{cuzzolin18belief-maxent,Cuzzolin99, black97geometric, rota97book, ha98geometric, wang91geometrical}, possibly in conjunction with an algebraic view \cite{cuzzolin00rss,cuzzolin01bcc,cuzzolin08isaim-matroid,cuzzolin01lattice,cuzzolin05amai,cuzzolin07bcc,cuzzolin14algebraic}.
Recent papers on this topic include \cite{luo2020vector,pan2020probability,long2021visualization}.

In the {geometric approach to uncertainty}, uncertainty measures can be seen as points of a suitably complex geometric space, and there manipulated (e.g. combined, conditioned and so on) \cite{cuzzolin01thesis,cuzzolin2008geometric,cuzzolin2021springer}.
Much work has been focusing on the geometry of belief functions, which live in a convex space termed the \emph{belief space}, which can be described both in terms of a simplex (a higher-dimensional triangle) and in terms of a recursive bundle structure \cite{cuzzolin01space,cuzzolin03isipta,cuzzolin14annals,cuzzolin14lap}. The analysis can be extended to Dempster's rule of combination by introducing the notion of a conditional subspace and outlining a geometric construction for Dempster's sum \cite{cuzzolin02fsdk,cuzzolin04smcb}.
The combinatorial properties of plausibility and commonality functions, as equivalent representations of the evidence carried by a belief function, have also been studied \cite{cuzzolin08pricai-moebius,cuzzolin10ida}. The corresponding spaces are simplices which are congruent to the belief space.

Subsequent work has extended the geometric approach to other uncertainty measures, focusing in particular on possibility measures (consonant belief functions) \cite{cuzzolin10fss} and consistent belief functions \cite{cuzzolin11-consistent,cuzzolin09isipta-consistent,cuzzolin08isaim-simplicial}, in terms of simplicial complexes \cite{cuzzolin04ipmu}. Analyses of belief functions in terms credal sets have also been conducted \cite{cuzzolin08-credal,antonucci10-credal,burger10brest}.

The geometry of the relationship between measures of different kinds has also been extensively studied \cite{cuzzolin05hawaii,cuzzolin09-intersection,cuzzolin07ecsqaru,cuzzolin2010credal}, with particular attention to the problem of transforming a belief function into a classical probability measure \cite{Cobb03isf,voorbraak89efficient,Smets:1990:CPP:647232.719592}. One can distinguish between an `affine' family of probability transformations \cite{cuzzolin07smcb} (those which commute with affine combination in the belief space), and an `epistemic' family of transforms \cite{cuzzolin07report}, formed by the relative belief and relative plausibility of singletons \cite{cuzzolin08unclog-semantics,cuzzolin2008semantics,CUZZOLIN2012786,cuzzolin06-geometry,cuzzolin10amai}, which possess dual properties with respect to Dempster's sum \cite{cuzzolin2008dual}.
Semantics for the main probability transforms can be provided in terms of credal sets, i.e., convex sets of probabilities \cite{cuzzolin2010credal}. 
The problem of finding the possibility measure which best approximates a given belief function \cite{aregui08constructing} can also be approached in geometric terms \cite{cuzzolin09ecsqaru,cuzzolin11isipta-consonant,cuzzolin14lp,Cuzzolin2014tfs}. 

In particular, analogously to what done in this paper, approximations induced by classical Minkowski norms can be derived and compared with classical outer consonant approximations \cite{Dubois90}.
Minkowski consistent approximations of belief functions in both the mass and the belief space representations can also be derived \cite{cuzzolin11-consistent}.
\\
Preliminary studies on the application of the geometric approach to conditioning have been conducted by the author in 
\cite{cuzzolin10brest,cuzzolin11isipta-conditional}. 

\subsection{Distances in evidence theory}  \label{sec:soa-distances}

A number of norms for belief functions have been introduced 
\cite{diaz06fusion,jiang08new,khatibi10new,shi10distance}, as a tool for assessing the level of conflict between different bodies of evidence, for approximating a b.f. using a different uncertainty measure and so on.

Most relevantly to the proposed, Jousselme et al \cite{jousselme10brest} have conducted a very interesting survey of all the distances and similarity measures so far introduced in belief calculus, and proposed a number of generalisations. 
Generalisations to belief functions of the classical Kullback--Leibler divergence 
\[
D_{\text{KL}}(P|Q) = \int_{-\infty}^\infty p(x) \log({p(x)}/{q(x)}) \mathrm{d} x 
\]
of two probability distributions $P,Q$, for instance, have been proposed, together with measures based on information theory, such as fidelity, or entropy-based norms \cite{Florea2009metrics}.

The most popular and most cited measure of dissimilarity was proposed by Jousselme et al. \cite{Jousselme200191}. Jousselme's measure assumes that mass functions $m$ are represented as vectors $\vec{m}$, and reads as
\[
d_\text{J}(m_1,m_2) \doteq \sqrt{\frac{1}{2} (\vec{m}_1 - \vec{m}_2)^T D (\vec{m}_1 - \vec{m}_2)},
\]
where $D(A,B) = \frac{|A \cap B|}{|A \cup B|}$ for all $A,B \in 2^\Theta$.
Jousselme's distance so defined (1) is positive definite (as proved by Bouchard et al. in \cite{Bouchard2013615}), 
and thus defines a metric distance; (2) takes into account the similarity among subsets (focal elements); and (3) is such that $D(A,B)<D(A,C)$ if $C$ is `closer' to $A$ than $B$.

Other similarity measures between belief functions have been proposed by Shi et al \cite{shi10distance}, Jiang et al \cite{jiang08new}, and others \cite{khatibi10new,diaz06fusion,jiang08new}. Among others, it is worth mentioning the following proposals.
\begin{itemize}
\item
The Dempster conflict $\kappa$ and Ristic's closely related \emph{additive global dissimilarity measure} \cite{ristic06if}: $-\log (1 - \kappa)$.
\item
The `fidelity' or Bhattacharia coefficient \cite{10.2307/25047882} extended to belief functions, namely $\sqrt{\vec{m}_1}^T W \sqrt{\vec{m}_2}$, where $W$ is positive definite and 
$\sqrt{\vec{m}}$ is the vector obtained by taking the square roots of each component of $\vec{m}$.
\item
Perry and Stephanou's distance \cite{187371},
\[
d_\text{PS} (m_1,m_2) = |\mathcal{E}_1 \cup \mathcal{E}_2| \left ( 1 - \frac{\mathcal{E}_1 \cap \mathcal{E}_2}{\mathcal{E}_1 \cup \mathcal{E}_2} \right ) + (\vec{m}_{12} - \vec{m}_1)^T (\vec{m}_{12} - \vec{m}_2),
\]
where $\mathcal{E}$ is, as usual, the collection of focal elements of the belief function with mass assignment $m$ and $\vec{m}_{12}$ is the mass vector of the Dempster combination of $m_1$ and $m_2$.
\item
Blackman and Popoli's \emph{attribute distance} \cite{Blackman99},
\[
\begin{array}{l}
d_\text{BP} (m_1,m_2) 
\\
\displaystyle
= -2 \log \left [\frac{1 - \kappa(m_1,m_2)}{1 - \max_i \{ \kappa(m_i,m_i)\}} \right ] + (\vec{m}_1 + \vec{m}_2)^T \vec{g}_A - \vec{m}_1^T G \vec{m}_2,
\end{array}
\]
where $\vec{g}_A$ is the vector with elements $\vec{g}_A(A) = \frac{|A|-1}{|\Theta|-1}$, $A \subset \Theta$, and
\[
G(A,B) = \frac{(|A|-1)(|B|-1)}{(|\Theta|-1)^2}, \quad A,B \subset \Theta .
\]
\item
Fixen and Mahler's \emph{Bayesian percent attribute miss} \cite{fixen95modified}:
\[
\vec{m}_1' P \vec{m}_2,
\]
where $P(A,B) = \frac{p(A \cap B)}{p(A)p(B)}$ and $p$ is an a priori probability on $\Theta$.
\item
Zouhal and Denoeux's inner product of pignistic functions \cite{zouhal98evidence}.
\item
The family of information-based distances also proposed by Denoeux \cite{Denoeux01ijufk}, in which the distance between $m_1$ and $m_2$ is quantified by the difference between their information contents $U(m_1), U(m_2)$,
\[
d_U(m_1,m_2) = |U(m_1) - U(m_2)|,
\]
where $U$ is any uncertainty measure for belief functions.
\end{itemize}

Many if not all of these measures could be in principle plugged in the minimisation problem (\ref{eq:geometric-conditioning}) to define conditional belief functions. 
Nevertheless, in this paper we will restrict ourselves to the class of Minkowski measures.
Note that the $L_1$ distance was used earlier by Klir \cite{klir1999uncertainty} and Harmanec \cite{DBLP:journals/corr/abs-1301-6703}).


\section{Geometric conditional belief functions} \label{sec:conditional-m}

\subsection{The geometry of belief functions} \label{sec:geometric}


Given a frame $\Theta$, each belief function $b : 2^\Theta \rightarrow [0,1]$ is completely specified by its $N - 2$ belief values $\{ b(A), \emptyset \subsetneq A \subsetneq\Theta \}$, $N \doteq 2^{n}$ ($n \doteq |\Theta|$), (as $b(\emptyset) = 0$, $b(\Theta) = 1$ for all b.f.s) and can therefore be represented as a vector of $\mathbb{R}^{N-2}$ 
\[
\vec{b} = [b(A), \emptyset \subsetneq A \subsetneq\Theta]'. 
\]
If we denote by ${b}_A$ the \emph{categorical} \cite{smets94transferable} belief function assigning all the mass to a single subset $A\subseteq \Theta$, 
\[
m_{b_A}(A) = 1, 
\quad
m_{b_A}(B)=0 \; \forall B\subseteq \Theta, B\neq A, 
\]
we can prove that \cite{cuzzolin01space,cuzzolin2008geometric} the set of points of $\mathbb{R}^{N-2}$ which correspond to a b.f. or `belief space' $\mathcal{B}$ coincides with the convex closure $Cl$ of all the vectors representing categorical belief functions
\[
\mathcal{B} = Cl(\vec{b}_A, \emptyset \subsetneq A \subseteq \Theta),
\]
where
\[
Cl(\vec{b}_1,...,\vec{b}_k) = \left \{ \vec{b} \in \mathcal{B} : \vec{b} = \alpha_1 \vec{b}_1 + \cdots + \alpha_k \vec{b}_k, \sum_i \alpha_i = 1, \; \alpha_i\geq 0\; \forall i \right \}.
\]

The belief space $\mathcal{B}$ is a simplex \cite{cuzzolin01thesis,cuzzolin14lap,cuzzolin2021springer}, and each vector $\vec{b}\in\mathcal{B}$ representing a belief function $b$ can be written as a convex sum as:
\begin{equation} \label{eq:dev}
\vec{b} = \sum_{\emptyset \subsetneq B \subseteq \Theta} m_b(B) \vec{b}_B.
\end{equation}

In the same way, each belief function is uniquely associated with the related set of `mass' values $m_b(A)$. It can therefore be seen also as a point of $\mathbb{R}^{N-1}$, the vector $\vec{m}_b = [ m_b(A), \emptyset \subsetneq A \subseteq \Theta ]'$ ($\Theta$ this time included) of its $N-1$ mass components, which can be decomposed as
\begin{equation} \label{eq:dev-m}
\vec{m}_b = \sum_{\emptyset \subsetneq B \subseteq \Theta} m_b(B) \vec{m}_B,
\end{equation}
where $\vec{m}_B$ is the vector of mass values associated with the categorical belief function ${b}_B$. 

Note that in $\mathbb{R}^{N-1}$ $\vec{m}_\Theta = [0, ..., 0, 1]'$ cannot be neglected.

\subsection{Notion of geometric conditional belief functions} \label{sec:notion-conditional}

Similarly, the vector $\vec{m}_a$ associated with any belief function $a$ whose mass supports only focal elements $\{ \emptyset \subsetneq B \subseteq A \}$ included in a given event $A$ can be decomposed as:
\begin{equation} \label{eq:dev-a}
\vec{m}_a = \sum_{\emptyset \subsetneq B \subseteq A} m_a(B) \vec{m}_B.
\end{equation}
The set of such vectors is a simplex $\mathcal{M}_A \doteq Cl(\vec{m}_B, \; \emptyset \subsetneq B \subseteq A)$. We call $\mathcal{M}_A$ the \emph{conditioning simplex} in the mass space.

Given a belief function $b$, we call \emph{geometric conditional belief function induced by a distance function $d$} in $\mathcal{M}$ the b.f.(s) $b_{d,\mathcal{M}}(.|A)$ which minimize(s) the distance $d(\vec{m}_b,\mathcal{M}_A)$ between the mass vector representing $b$ and the conditioning simplex associated with $A$ in $\mathcal{M}$. 

As recalled above, a large number of proper distance functions or mere dissimilarity measures between belief functions have been proposed in the past, and many others can be imagined or designed \cite{jousselme10brest}.
We consider here as distance functions the three major $L_p$ norms $d=L_1$, $d=L_2$ and $d=L_\infty$. This is not to claim that these are \emph{the} distance functions of choice for this problem. 
In recent times, however, $L_p$ norms have been successfully employed in different problems such as probability \cite{cuzzolin07smcb} and possibility \cite{cuzzolin11-consistent,cuzzolin11isipta-consonant} transformation/approximation. 

For vectors $\vec{m}_b$, $\vec{m}_{b'} \in \mathcal{M}$ representing the b.p.a.s of two belief functions $b$, $b'$, such norms read as
\begin{equation} \label{eq:lp-m}
\begin{array}{l} 
\| \vec{m}_b - \vec{m}_{b'} \|_{L_1} \doteq \displaystyle \sum_{\emptyset \subsetneq B \subseteq \Theta} \Big | m_b(B) - m_{b'}(B) \Big |, 
\\
\| \vec{m}_b - \vec{m}_{b'} \|_{L_\infty} \doteq \displaystyle \max_{\emptyset \subsetneq B \subseteq \Theta} \Big |m_b(B) - m_{b'}(B) \Big | 
\\ 
\| \vec{m}_b - \vec{m}_{b'} \|_{L_2} \doteq \displaystyle \sqrt{\sum_{\emptyset \subsetneq B \subseteq \Theta} ( m_b(B) - m_{b'}(B))^2}.
\end{array}
\end{equation}

\subsection{Conditioning by $L_1$ norm} \label{sec:l1-m}

Given a belief function $b$ with basic probability assignment $m_b$ collected in a vector $\vec{m}_b \in \mathcal{M}$, its $L_1$ conditional version(s) $b_{L_1,\mathcal{M}}(.|A)$ has/have basic probability assignment $m_{L_1,\mathcal{M}}(.|A)$ s.t.:
\begin{equation} \label{eq:conditional-l1}
\vec{m}_{L_1,\mathcal{M}}(.|A) \doteq \arg \min_{\vec{m}_a \in \mathcal{M}_A} \| \vec{m}_b - \vec{m}_a \|_{L_1}.
\end{equation}
Using the expression (\ref{eq:lp-m}) of the $L_1$ norm in the mass space $\mathcal{M}$, (\ref{eq:conditional-l1}) becomes:
\[
\arg \min_{\vec{m}_a \in \mathcal{M}_A} \| \vec{m}_b - \vec{m}_a \|_{L_1} = \arg \min_{\vec{m}_a \in \mathcal{M}_A} \sum_{\emptyset \subsetneq B \subseteq \Theta} | m_b(B) - m_a(B)|.
\]
By exploiting the fact that the candidate solution $\vec{m}_a$ is an element of $\mathcal{M}_A$ (Equation (\ref{eq:dev-a})) we can greatly simplify this expression.

\begin{lemma} \label{lem:difference-vector-m}
The difference vector $\vec{m}_b - \vec{m}_a$ in $\mathcal{M}$ has the form:
\begin{equation} \label{eq:dev-beta2}
\begin{array}{l}
\displaystyle \vec{m}_b - \vec{m}_a = \sum_{\emptyset \subsetneq B \subsetneq A} \beta(B) \vec{m}_B + \Big ( b(A) - 1 - \sum_{\emptyset \subsetneq B \subsetneq A} \beta(B) \Big ) \vec{m}_A + \sum_{B \not\subset A} m_b(B) \vec{m}_B
\end{array}
\end{equation}
where $\beta(B) \doteq m_b(B) - m_a(B)$.
\end{lemma}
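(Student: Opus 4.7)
The plan is to substitute the canonical decompositions (\ref{eq:dev-m}) and (\ref{eq:dev-a}) for $\vec{m}_b$ and $\vec{m}_a$, respectively, and then collect terms by focal element. Since the categorical mass vectors $\{\vec{m}_B : \emptyset \subsetneq B \subseteq \Theta\}$ are the standard basis vectors of $\mathbb{R}^{N-1}$, the expansion of $\vec{m}_b - \vec{m}_a$ along them is unique, so it is enough to read off the coefficient of each $\vec{m}_B$ separately.

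I would partition the index set $\{ B : \emptyset \subsetneq B \subseteq \Theta \}$ into three classes: the proper nonempty subsets $\emptyset \subsetneq B \subsetneq A$, the event $A$ itself, and the subsets $B \not\subset A$. In the first class both $m_b(B)$ and $m_a(B)$ may be nonzero, so the coefficient is $m_b(B) - m_a(B) = \beta(B)$, which gives the first sum in (\ref{eq:dev-beta2}). In the third class $m_a(B) = 0$, because $a$ is supported on $2^A$ by the definition of $\mathcal{M}_A$, so the coefficient is simply $m_b(B)$, which gives the third sum.

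The only computation that is not immediate is the coefficient of $\vec{m}_A$, namely $m_b(A) - m_a(A)$. Here I would use that $m_a$ is a basic probability assignment supported on $2^A$, which forces the normalisation $\sum_{\emptyset \subsetneq B \subseteq A} m_a(B) = 1$; solving for $m_a(A)$ and substituting $m_a(B) = m_b(B) - \beta(B)$ for every proper subset $B \subsetneq A$, together with the definitional identity $b(A) = \sum_{\emptyset \subsetneq B \subseteq A} m_b(B)$, collapses the coefficient to $b(A) - 1 - \sum_{\emptyset \subsetneq B \subsetneq A} \beta(B)$, which is exactly the coefficient of $\vec{m}_A$ in (\ref{eq:dev-beta2}).

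I do not anticipate any real obstacle: the whole argument is bookkeeping on top of the two already-established decompositions, once one is careful to single out $B = A$ from the other subsets of $A$ in order to exploit the normalisation of $m_a$. The potential pitfall is a double counting of the $\vec{m}_A$ term, which is avoided by the strict inclusions in the first sum ($B \subsetneq A$) and by reading $B \not\subset A$ as $B \not\subseteq A$ in the last sum, so that the three index ranges form a partition.
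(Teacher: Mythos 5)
Your proposal is correct and follows essentially the same route as the paper's proof: both expand $\vec{m}_b$ and $\vec{m}_a$ in the canonical basis via (\ref{eq:dev-m}) and (\ref{eq:dev-a}), set $\beta(B) = m_b(B) - m_a(B)$, and use the normalisation $\sum_{\emptyset \subsetneq B \subseteq A} m_a(B) = 1$ together with $b(A) = \sum_{B \subseteq A} m_b(B)$ to turn the coefficient of $\vec{m}_A$ into $b(A) - 1 - \sum_{\emptyset \subsetneq B \subsetneq A}\beta(B)$. The only cosmetic difference is that the paper phrases this as eliminating the dependent variable $\beta(A)$ from the sum over all $B \subseteq A$, whereas you compute the $\vec{m}_A$ coefficient directly; the bookkeeping is identical.
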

In the $L_1$ case therefore:
\begin{equation}\label{eq:l1-norm}
\| \vec{m}_b - \vec{m}_a \|_{L_1} = \sum_{\emptyset \subsetneq B \subsetneq A} |\beta(B)| + \Big | b(A) - 1 - \sum_{\emptyset \subsetneq B \subsetneq A} \beta(B) \Big |,
\end{equation}
plus the constant $\sum_{B \not\subset A} |m_b(B)|$. This is a function of the form
\begin{equation}\label{eq:form}
\sum_i |x_i| + \Big |-\sum_i x_i - k \Big |, \;\;\; k \geq 0
\end{equation}
which has an entire simplex of minima, namely: $x_i \leq 0$ $\forall i$, $\sum_i x_i \geq - k$. See Figure \ref{fig:l1} for the case of two variables, $x_1$ and $x_2$ (corresponding to the $L_1$ conditioning problem on an event $A$ of size $|A| = 2$).

\begin{figure}[htb]
\begin{center}
\begin{tabular}{cc}
\includegraphics[width = 0.6 \textwidth]{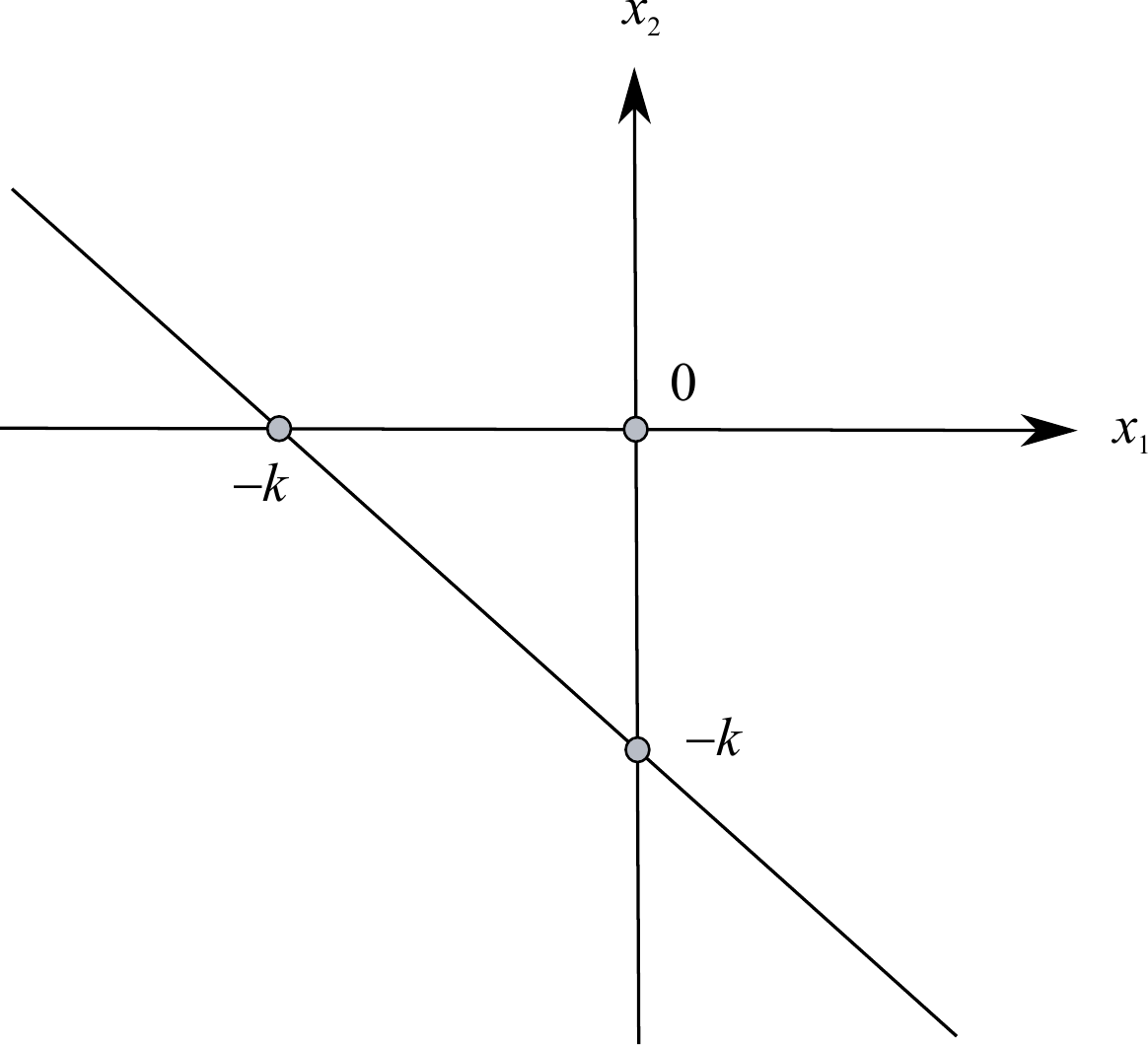} 
\end{tabular}
\end{center}
\caption{The minima of a function of the form (\ref{eq:form}) with two variables $x_1$, $x_2$ form the triangle $x_1\leq 0$, $x_2\leq 0$, $x_1 + x_2 \geq -k$ depicted here. 
\label{fig:l1} }
\end{figure}

A similar behaviour takes place in the general case too.

\begin{theorem} \label{the:l1-m}
Given a belief function $b:2^\Theta \rightarrow [0,1]$ and an arbitrary non-empty focal element $\emptyset \subsetneq A \subseteq \Theta$, the set of $L_1$ conditional belief functions $b_{L_1,\mathcal{M}}(.|A)$ with respect to $A$ in $\mathcal{M}$ is the set of b.f.s with core in $A$ such that their mass dominates that of $b$ over all the proper subsets of $A$:
\begin{equation}\label{eq:l1-conditional-solution}
b_{L_1,\mathcal{M}}(.|A) = \Big \{ a : 2^\Theta \rightarrow [0,1] : \mathcal{C}_{a} \subseteq A, \; m_{a}(B) \geq m_b(B) \; \forall \emptyset \subsetneq B \subseteq A \Big \}.
\end{equation}
\end{theorem}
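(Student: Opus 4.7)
The plan is to reduce the theorem to a simple scalar minimisation enabled by Lemma 1, and then to translate the resulting analytical conditions on the increments $\beta(B) = m_b(B) - m_a(B)$ back into conditions on the masses $m_a(B)$.

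First, since $\vec m_a$ ranges over $\mathcal{M}_A$, the term $\sum_{B\not\subset A} m_b(B)$ in (\ref{eq:l1-norm}) is a constant and can be ignored. It then suffices to minimise
\[
F(\{\beta(B)\}) \doteq \sum_{\emptyset\subsetneq B\subsetneq A} |\beta(B)| + \Big|\,b(A)-1-\sum_{\emptyset\subsetneq B\subsetneq A}\beta(B)\Big|,
\]
subject to the constraints that $\vec m_a \in \mathcal{M}_A$, i.e.\ $m_a(B) = m_b(B) - \beta(B) \geq 0$ for every $\emptyset\subsetneq B \subsetneq A$ and $m_a(A) = 1 - \sum_{\emptyset\subsetneq B\subsetneq A} m_a(B) \geq 0$. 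Setting $x_B \doteq \beta(B)$ and $k \doteq 1-b(A) \geq 0$, the objective has exactly the form (\ref{eq:form}): $\sum_B |x_B| + |\sum_B x_B + k|$.

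Next I establish the unconstrained geometry of this function. By the triangle inequality $\sum_B |x_B| \geq |\sum_B x_B|$, so if $S \doteq \sum_B x_B$ then $F \geq |S| + |S+k|$. The scalar function $S \mapsto |S| + |S+k|$ attains its minimum value $k$ precisely on the interval $-k \leq S \leq 0$, and outside that interval is strictly larger. Equality in the triangle inequality with $S \leq 0$ forces $x_B \leq 0$ for every $B$. Hence the set of minimisers of $F$ is exactly
\[
\Big\{\{x_B\} : x_B \leq 0 \text{ for all } \emptyset\subsetneq B\subsetneq A,\; \sum_B x_B \geq -k \Big\},
\]
as illustrated in Figure~\ref{fig:l1}, with minimum value $k = 1-b(A)$.

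Finally I translate this back to mass language and verify compatibility with the b.f.\ constraints. The condition $x_B \leq 0$ is precisely $m_a(B) \geq m_b(B)$ for every $\emptyset\subsetneq B \subsetneq A$. For the sum constraint, observe that $\sum_{\emptyset\subsetneq B\subsetneq A} m_b(B) = b(A)-m_b(A)$ by (\ref{eq:belief-function}), while $\sum_{\emptyset\subsetneq B\subsetneq A} m_a(B) = 1 - m_a(A)$ because $\vec m_a \in \mathcal{M}_A$; thus
\[
\sum_B x_B = \big(b(A)-m_b(A)\big) - \big(1-m_a(A)\big) \geq -k = b(A)-1
\]
reduces to $m_a(A) \geq m_b(A)$. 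So the constraint on the sum is exactly the inequality for $B=A$, and together with the inequalities for $B\subsetneq A$ it yields the full system $m_a(B) \geq m_b(B)$ for all $\emptyset\subsetneq B \subseteq A$ stated in (\ref{eq:l1-conditional-solution}). Conversely, any such $a$ automatically satisfies $m_a(B) \geq 0$ (since $m_b(B)\geq 0$) and $\sum_{B\subseteq A} m_a(B) = 1$ is enforced by membership in $\mathcal{M}_A$, so every element of the claimed set is a feasible minimiser.

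The only subtle step is the last one: one must check that the two types of constraints --- feasibility ($\vec m_a \in \mathcal{M}_A$) and optimality ($x_B \leq 0$, $\sum_B x_B \geq -k$) --- fit together so that the sum constraint encodes exactly $m_a(A)\geq m_b(A)$ and no feasibility condition is left unsatisfied. Once this bookkeeping is done the theorem follows, and the common minimum distance is $\sum_{B\not\subset A} m_b(B) + (1-b(A)) = 2\sum_{B\not\subset A} m_b(B)$.
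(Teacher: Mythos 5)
Your proof is correct and follows essentially the same route as the paper: reduce the objective via Lemma \ref{lem:difference-vector-m} to a function of the form (\ref{eq:form}), characterise its minimisers as $x_B \le 0$, $\sum_B x_B \ge -k$, and translate these back into $m_a(B) \ge m_b(B)$ for all $\emptyset \subsetneq B \subseteq A$. The only differences are improvements in rigour rather than in approach: you actually prove, via the triangle inequality, the characterisation of the minima of (\ref{eq:form}) that the paper merely asserts (and illustrates in Figure \ref{fig:l1}), you explicitly verify compatibility with the feasibility constraints defining $\mathcal{M}_A$, and your closing value $2\,pl_b(A^c)$ for the minimal $L_1$ distance is also correct.
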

As in the toy example of Figure \ref{fig:l1}, the set of $L_1$ conditional belief function in $\mathcal{M}$ has geometrically the form of a simplex.

\begin{theorem} \label{the:l1-simplex-m}
Given a b.f. $b:2^\Theta \rightarrow [0,1]$ and an arbitrary non-empty focal element $\emptyset \subsetneq A \subseteq \Theta$, the set of $L_1$ conditional belief functions $b_{L_1,\mathcal{M}}(.|A)$ with respect to $A$ in $\mathcal{M}$ is the simplex
\[
\mathcal{M}_{L_1,A}[b] = Cl(\vec{m}[b]|_{L_1}^B A, \emptyset \subsetneq B \subseteq A)
\]
whose vertex $\vec{m}[b]|_{L_1}^B A$, $\emptyset \subsetneq B \subseteq A$, has coordinates $\{ m_a(B) \}$ such that
\begin{equation}\label{eq:l1-vertices}
\left \{
\begin{array}{ll}
m_a(B) = m_b(B) + 1 - b(A) = m_b(B) + pl_b(A^c), & \\ m_a(X) = m_b(X) \hspace{18mm} \forall \emptyset \subsetneq X \subsetneq A, X\neq B. &
\end{array}
\right.
\end{equation}
\end{theorem}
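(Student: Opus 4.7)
The strategy is to apply Theorem \ref{the:l1-m} to read the feasible set as a polytope cut out by linear inequalities on the mass coordinates restricted to subsets of $A$, and then to identify its vertices explicitly. By Theorem \ref{the:l1-m}, any $a \in b_{L_1,\mathcal{M}}(.|A)$ satisfies $m_a(X) = 0$ for $X \not\subseteq A$, together with the $2^{|A|}-1$ lower bounds $m_a(X) \geq m_b(X)$ for $\emptyset \subsetneq X \subseteq A$ and the single equality $\sum_{\emptyset \subsetneq X \subseteq A} m_a(X) = 1$. Once the equality is imposed, the feasible set lives in an affine subspace of dimension $2^{|A|} - 2$ cut by exactly $2^{|A|} - 1$ half-spaces --- precisely the facet count expected of a simplex of that dimension --- which suggests that each vertex should correspond to making all but one of the lower bounds active.

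The first step is to exhibit the candidate vertices. Fixing $\emptyset \subsetneq B \subseteq A$ and requiring $m_a(X) = m_b(X)$ for every $X \subseteq A$ with $X \neq B$, the normalization constraint determines
\[
m_a(B) = 1 - \sum_{X \subseteq A,\, X \neq B} m_b(X) = 1 - (b(A) - m_b(B)) = m_b(B) + 1 - b(A) = m_b(B) + pl_b(A^c),
\]
which reproduces (\ref{eq:l1-vertices}). Since $m_b(B) \geq 0$ and $1 - b(A) = pl_b(A^c) \geq 0$, the value is a legitimate mass, so each $\vec{m}[b]|_{L_1}^B A$ lies in the feasible set described by Theorem \ref{the:l1-m}.

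The second and central step is to show that every element of the feasible set is a convex combination of these vertices, thereby establishing both spanning and affine independence in one shot. Guided by the form of the vertices, I would set
\[
\alpha_B \doteq \frac{m_a(B) - m_b(B)}{1 - b(A)}, \quad \emptyset \subsetneq B \subseteq A,
\]
assuming temporarily $b(A) < 1$. Each $\alpha_B \geq 0$ by Theorem \ref{the:l1-m}, and $\sum_B \alpha_B = (1 - b(A))/(1 - b(A)) = 1$ since $\sum_{B \subseteq A} m_a(B) = 1$ and $\sum_{B \subseteq A} m_b(B) = b(A)$. Reading the $B'$-component of $\sum_B \alpha_B \vec{m}[b]|_{L_1}^B A$ for each $\emptyset \subsetneq B' \subseteq A$ then yields $m_b(B') + \alpha_{B'}(1 - b(A)) = m_a(B')$, so the reconstruction is exact and the coefficients are uniquely determined --- hence the vertices are affinely independent and their convex hull is a genuine simplex. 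The degenerate case $b(A) = 1$ is immediate: Theorem \ref{the:l1-m} forces $m_a = m_b$, and every vertex collapses to $\vec{m}_b$ itself.

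The only real pitfall is bookkeeping around the vertex indexed by $B = A$: for $B \subsetneq A$ only the $B$-component is shifted by $1 - b(A)$ while $m_a(A) = m_b(A)$ is left untouched, whereas for $B = A$ the shift falls on $m_a(A)$ --- a remark worth making explicit so that (\ref{eq:l1-vertices}) is read uniformly in $B$. Beyond this accounting, the argument is essentially a linear-algebra verification of the coefficients and poses no substantial obstacle.
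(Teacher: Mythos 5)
Your proof is correct and follows essentially the same route as the paper: both start from the linear constraint description of the $L_1$ solution set given by Theorem \ref{the:l1-m} (equivalently the system (\ref{eq:constraints-l1})) and identify the vertices as the points where all but one of the lower-bound constraints are active, which yields (\ref{eq:l1-vertices}). Your write-up is in fact somewhat more complete than the paper's, since you explicitly verify the convex-hull identity via the barycentric coordinates $\alpha_B$ (establishing affine independence and hence the simplex structure) and you treat the degenerate case $b(A)=1$, both of which the paper leaves implicit.
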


It is important to notice that all the vertices of the $L_1$ conditional simplex fall inside $\mathcal{M}_A$ proper. In principle, some of them could have fallen in the linear space generated by $\mathcal{M}_A$ but outside the simplex $\mathcal{M}_A$, i.e., some of the solutions $m_a(B)$ could have been negative. This is indeed the case for geometrical b.f.s induced by other norms, as we will see in the following.

\subsection{Conditioning by $L_2$ norm} \label{sec:l2-m}

Let us now compute the analytical form of the $L_2$ conditional belief function(s) in the mass space. We make use of the form (\ref{eq:dev-beta2}) of the difference vector $\vec{m}_b - \vec{m}_a$, where again $\vec{m}_a$ is an arbitrary vector of the conditional simplex $\mathcal{M}_A$. In this case, though, it is convenient to recall that the minimal $L_2$ distance between a point $\vec{p}$ and a vector space is attained by the point $\hat{q}$ of the vector space $V$ s.t. the difference vector $\vec{p} -\hat{q}$ is orthogonal to all the generators $\vec{g}_i$ of $V$:
\[
\arg \min_{\vec{q} \in V} \| \vec{p} - \vec{q} \|_{L_2} = \hat{q} \in V : \langle \vec{p} -\hat{q}, \vec{g}_i \rangle = 0 \;\;\; \forall i
\]
whenever $\vec{p}\in\mathbb{R}^m$, $V = span(\vec{g}_i, i)$. 

This fact is used in the proof of Theorem \ref{the:l2-m}.
\begin{theorem} \label{the:l2-m}
Given a belief function $b:2^\Theta \rightarrow [0,1]$ and an arbitrary non-empty focal element $\emptyset \subsetneq A \subseteq \Theta$, the unique $L_2$ conditional belief function $b_{L_2,\mathcal{M}}(.|A)$ with respect to $A$ in $\mathcal{M}$ is the b.f. whose b.p.a. redistributes the mass $1-b(A)$ to each focal element $B\subseteq A$ in an equal way: $\forall \emptyset \subsetneq B \subseteq A$
\begin{equation} \label{eq:l2-solution}
\begin{array}{llll}
m_{L_2,\mathcal{M}}(B|A) & = & \displaystyle m_b(B) + \frac{1}{2^{|A|}-1} \sum_{B\not \subset A} m_b(B) = \displaystyle m_b(B) + \frac{pl_b(A^c)}{2^{|A|}-1}.
\end{array}
\end{equation}
\end{theorem}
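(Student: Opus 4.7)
The plan is to exploit two structural facts: the decomposition of the difference vector provided by Lemma \ref{lem:difference-vector-m}, and the observation that in the mass space $\mathbb{R}^{N-1}$ the categorical vectors $\{\vec m_B : \emptyset \subsetneq B \subseteq \Theta\}$ coincide with the standard basis and are therefore mutually orthonormal. This orthogonality turns the $L_2$-minimisation into an unconstrained quadratic minimisation in the free coefficients $\beta(B)=m_b(B)-m_a(B)$, which can then be solved in closed form by differentiation.

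First I would apply Lemma \ref{lem:difference-vector-m} to write $\vec m_b-\vec m_a$ in the form \eqref{eq:dev-beta2} and, using orthonormality of the $\vec m_B$'s, expand the squared norm as
\[
\|\vec m_b-\vec m_a\|_{L_2}^{2} = \sum_{\emptyset\subsetneq B\subsetneq A}\beta(B)^2 + \Big(b(A)-1-\sum_{\emptyset\subsetneq B\subsetneq A}\beta(B)\Big)^{2} + \sum_{B\not\subset A} m_b(B)^2.
\]
The last sum is a constant independent of $\vec m_a$, so the problem reduces to minimising a convex quadratic in the $2^{|A|}-2$ free variables $\{\beta(B):\emptyset\subsetneq B\subsetneq A\}$.

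Next I would set $\partial/\partial\beta(B_0)=0$ for each proper subset $B_0\subsetneq A$. This yields
\[
\beta(B_0)=b(A)-1-\sum_{\emptyset\subsetneq B\subsetneq A}\beta(B),
\]
for every such $B_0$, forcing all $\beta(B_0)$ to share a common value $\beta^\ast$. Substituting back gives $(2^{|A|}-1)\beta^\ast=b(A)-1$, so $\beta^\ast=-(1-b(A))/(2^{|A|}-1)$. From Lemma \ref{lem:difference-vector-m} the coefficient multiplying $\vec m_A$ in $\vec m_b-\vec m_a$ also equals $\beta^\ast$, so every proper subset $B\subsetneq A$ and $A$ itself receive the same additional mass $pl_b(A^c)/(2^{|A|}-1)=(1-b(A))/(2^{|A|}-1)$, which is precisely \eqref{eq:l2-solution}.

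The last step is to confirm that the candidate lies in the simplex $\mathcal{M}_A$, not merely its affine hull: nonnegativity is immediate since $m_b(B)\geq 0$ and $1-b(A)\geq 0$, while normalisation follows from $\sum_{\emptyset\subsetneq B\subseteq A}m_{L_2,\mathcal{M}}(B|A)=b(A)+(2^{|A|}-1)\cdot(1-b(A))/(2^{|A|}-1)=1$. Strict convexity of the squared $L_2$ norm on the affine hull of $\mathcal{M}_A$ guarantees uniqueness. I do not anticipate a genuine obstacle here; the only mildly delicate point is keeping consistent track of which coefficient in \eqref{eq:dev-beta2} is ``dependent'' versus ``free'', and verifying at the end that the determined coefficient of $\vec m_A$ is consistent with the common value forced on the free $\beta(B)$'s, which is what ultimately yields the symmetric redistribution.
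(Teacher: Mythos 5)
Your proof is correct and follows essentially the same route as the paper: both minimise the quadratic $L_2$ objective over the affine hull of $\mathcal{M}_A$ via Lemma \ref{lem:difference-vector-m}, and your stationarity conditions $\beta(B_0)=b(A)-1-\sum_{\emptyset\subsetneq B\subsetneq A}\beta(B)$ are exactly the paper's orthogonality conditions $\langle \vec{m}_b-\vec{m}_a,\vec{m}_B-\vec{m}_A\rangle=0$, i.e., the same linear system $2\beta(B)+\sum_{X\neq B}\beta(X)=b(A)-1$. The only differences are cosmetic: you solve that system by the symmetry observation that all $\beta$'s must share a common value rather than by inverting the matrix $\mathcal{A}$ as the paper does, and you add an explicit check of non-negativity, normalisation and uniqueness that the paper leaves implicit.
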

According to Equation (\ref{eq:l2-solution}) the $L_2$ conditional belief function is unique, and corresponds to the mass function which \emph{redistributes the mass the original belief function assigns to focal elements not included in $A$ to each and all the subsets of $A$ in an equal, even way}. 

$L_2$ and $L_1$ conditional belief functions in $\mathcal{M}$ display a strong relationship.
\begin{theorem}
Given a belief function $b:2^\Theta \rightarrow [0,1]$ and an arbitrary non-empty focal element $\emptyset \subsetneq A \subseteq \Theta$, the $L_2$ conditional belief function $b_{L_2,\mathcal{M}}(.|A)$ with respect to $A$ in $\mathcal{M}$ is the center of mass of the simplex $\mathcal{M}_{L_1,A}[b]$ of $L_1$ conditional belief functions with respect to $A$ in $\mathcal{M}$.
\end{theorem}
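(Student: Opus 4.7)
The plan is to prove this by direct coordinate-wise averaging of the vertices identified in Theorem \ref{the:l1-simplex-m}, and matching the result against the closed form in Theorem \ref{the:l2-m}. Since the barycenter of a simplex with $k$ affinely independent vertices is the uniform average $\frac{1}{k}\sum_{i=1}^k \vec{v}_i$ of those vertices, and since $\mathcal{M}_{L_1,A}[b]$ has exactly $2^{|A|}-1$ vertices $\vec{m}[b]|_{L_1}^B A$ indexed by the non-empty subsets $\emptyset \subsetneq B \subseteq A$, the center of mass is
\[
\vec{g} \doteq \frac{1}{2^{|A|}-1} \sum_{\emptyset \subsetneq B \subseteq A} \vec{m}[b]|_{L_1}^B A.
\]
The first step is therefore simply to read off the coordinates of $\vec{g}$ from Equation (\ref{eq:l1-vertices}).

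Next, I fix an arbitrary focal element $C$ and compute the mass $\vec{g}(C)$ of the barycenter at $C$. If $C \not\subseteq A$, then all vertices lie in $\mathcal{M}_A$ and assign zero mass to $C$, so $\vec{g}(C) = 0$, which agrees with the fact that $b_{L_2,\mathcal{M}}(\cdot|A)$ lives in $\mathcal{M}_A$ as well. If instead $\emptyset \subsetneq C \subseteq A$, Equation (\ref{eq:l1-vertices}) tells us that the vertex indexed by $B=C$ contributes mass $m_b(C) + pl_b(A^c)$ at $C$, whereas each of the remaining $2^{|A|}-2$ vertices (indexed by $B \neq C$) contributes only the original mass $m_b(C)$. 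Summing and dividing by the number of vertices yields
\[
\vec{g}(C) \; = \; \frac{(m_b(C) + pl_b(A^c)) + (2^{|A|}-2)\, m_b(C)}{2^{|A|}-1} \; = \; m_b(C) + \frac{pl_b(A^c)}{2^{|A|}-1},
\]
which is precisely the expression (\ref{eq:l2-solution}) for $m_{L_2,\mathcal{M}}(C|A)$ given in Theorem \ref{the:l2-m}.

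Since $\vec{g}$ and $\vec{m}_{L_2,\mathcal{M}}(\cdot|A)$ agree coordinate-by-coordinate on all non-empty subsets of $\Theta$, they coincide as vectors in $\mathcal{M}$, establishing the claim. No genuine obstacle is expected: the only subtlety is bookkeeping for the vertex $B = A$ (where the "extra" mass $pl_b(A^c)$ is added to $m_b(A)$ itself, consistent with (\ref{eq:l1-vertices})), and the observation that every non-empty $C \subseteq A$ is singled out by exactly one vertex of the $L_1$ simplex, which is what makes the counting $1 + (2^{|A|}-2) = 2^{|A|}-1$ produce the denominator appearing in (\ref{eq:l2-solution}).
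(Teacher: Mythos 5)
Your proposal is correct and follows essentially the same route as the paper's own proof: averaging the $2^{|A|}-1$ vertices of (\ref{eq:l1-vertices}) with equal weights and checking coordinate-by-coordinate that the result is exactly (\ref{eq:l2-solution}), using $1-b(A)=pl_b(A^c)$. Your explicit bookkeeping (one vertex contributing $m_b(C)+pl_b(A^c)$ at $C$, the remaining $2^{|A|}-2$ contributing $m_b(C)$, and zero mass outside $A$) just spells out the single computation the paper states more tersely.
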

\begin{proof}
By definition the center of mass of $\mathcal{M}_{L_1,A}[b]$, whose vertices are given by (\ref{eq:l1-vertices}), is the vector
\[
\frac{1}{2^{|A|}-1} \sum_{\emptyset \subsetneq B \subseteq A} \vec{m}[b]|_{L_1}^B A
\]
whose entry $B$ is given by $\displaystyle \frac{1}{2^{|A|}-1} \Big [m_b(B) (2^{|A|}-1) + (1 - b(A))\Big]$, i.e., (\ref{eq:l2-solution}).
\end{proof}

\subsection{Conditioning by $L_\infty$ norm} \label{sec:linf-m}

Similarly, we can use Equation (\ref{eq:dev-beta2}) to minimize the $L_\infty$ distance between the original mass vector $\vec{m}_b$ and the conditioning subspace $\mathcal{M}_A$. Let us recall it here for sake of readability:
\[
\begin{array}{l}
\displaystyle \vec{m}_b - \vec{m}_a = \sum_{\emptyset \subsetneq B \subsetneq A} \beta(B) \vec{m}_B + \sum_{B \not\subset A} m_b(B) \vec{m}_B + \bigg ( b(A) - 1 - \sum_{\emptyset \subsetneq B \subsetneq A} \beta(B) \bigg ) \vec{m}_A.
\end{array}
\]
Its $L_\infty$ norm reads as $\| \vec{m}_b - \vec{m}_a \|_{L_\infty} = $
\[
= \max \left \{ |\beta(B)|, \emptyset \subsetneq B\subsetneq A ; |m_b(B)|, B \not\subset A ; \bigg | b(A) - 1 - \sum_{\emptyset \subsetneq B \subsetneq A} \beta(B) \bigg | \right \}.
\]
As 
\[
\bigg | b(A) - 1 - \sum_{\emptyset \subsetneq B \subsetneq A} \beta(B) \bigg | = \bigg |\sum_{B \not \subset A} m_b(B) + \sum_{\emptyset \subsetneq B \subsetneq A} \beta(B) \bigg |
\]
the above norm simplifies as
\begin{equation} \label{eq:linfnorm}
\begin{array}{l}
\displaystyle \max \left \{ |\beta(B)|, \emptyset \subsetneq B\subsetneq A ; \;\; \max_{B \not\subset A} \{ m_b(B) \} ; \;\; \bigg | \sum_{B \not \subset A} m_b(B) + \sum_{\emptyset \subsetneq B \subsetneq A} \beta(B) \bigg | \right \}.
\end{array}
\end{equation}
This is a function of the form
\begin{equation} \label{eq:function}
\begin{array}{lll}
f(x_1,...,x_{m-1}) & = & \displaystyle \max \left \{ |x_i| \; \forall i, \bigg |\sum_i x_i + k_1 \bigg |, k_2 \right \},
\end{array}
\end{equation}
with $0 \leq k_2 \leq k_1 \leq 1$. 

Consider the case $m=3$. Such a function has two possible behaviors in terms of its minimal region in the plane $x_1,x_2$. 
\\
If $k_1\leq 3k_2$ its contour function has a
set of minimal points given by 
\[
x_i \geq -k_2, x_1 + x_2 \leq k_2 - k_1.
\]
In the opposite case $k_1 > 3k_2$ the contour function 
admits a single minimal point, located in $[-1/3 k_1, -1/3 k_1]$.

For an arbitrary number $m-1$ of variables $x_1,...,x_{m-1}$, the first case is such that $k_2 \geq k_1/m$, in which situation the set of minimal points of a function of the form (\ref{eq:function}) is such that 
\[
x_i \geq -k_2, \quad \sum_i x_i \leq k_2 - k_1,
\]
and forms a simplex with $m$ vertices. Each vertex $v^i$, $i \neq m$ has components 
\[
v^i(j) = -k_2 \; \forall j\neq i, 
\quad
v^i(i) = -k_1 + (m-1)k_2, 
\]
while obviously $v^m = [-k_2,\cdots,-k_2]'$. In the opposite case the unique minimal point is located in 
\[
[(-1/m) k_1,\cdots,(-1/m) k_1]'.
\]

This analysis applies to the norm (\ref{eq:linfnorm}) as follows.

\begin{theorem} \label{the:linf-m}
Given a belief function $b:2^\Theta \rightarrow [0,1]$ with b.p.a. $m_b$, and an arbitrary non-empty focal element $\emptyset \subsetneq A \subseteq \Theta$, the set of $L_\infty$ conditional belief functions $m_{L_\infty,\mathcal{M}}(.|A)$ with respect to $A$ in $\mathcal{M}$ forms the simplex
\[
\mathcal{M}_{L_\infty,A}[b] = Cl(\vec{m}[b]|_{L_\infty}^{\bar{B}}A, \; \bar{B} \subseteq A)
\]
with vertices
\begin{equation} \label{eq:vertices-linf-n-1}
\left \{ 
\begin{array}{l} 
\displaystyle 
\vec{m}[b]|_{L_\infty}^{\bar{B}} (B|A) = m_b(B) + \max_{C \not \subset A} m_b(C) 
\quad \quad 
\forall B \subseteq A, B \neq \bar{B} 
\\  
\displaystyle
\vec{m}[b]|_{L_\infty}^{\bar{B}}(\bar{B}|A) = m_b(\bar{B}) + \sum_{C \not \subset A} m_b(C) - (2^{|A|}-2) \max_{C \not \subset A} m_b(C)
\end{array} 
\right.
\end{equation}
whenever
\[
\max_{C \not \subset A} m_b(C) \geq \frac{1}{2^{|A|}-1} \sum_{C \not \subset A} m_b(C).
\]
It reduces to the single belief function
\[
m_{{L_\infty},\mathcal{M}}(B|A) = m_b(B) + \frac{1}{2^{|A|}-1} \sum_{C \not \subset A} m_b(C) \;\;\; \forall B \subseteq A
\]
whenever 
\[
\max_{C \not \subset A} m_b(C) < \frac{1}{2^{|A|}-1} \sum_{C \not \subset A} m_b(C). 
\]
The latter is the barycenter of the simplex of $L_\infty$ conditional b.f.s in the former case, and coincides with the $L_2$ conditional belief function (\ref{eq:l2-solution}).
\end{theorem}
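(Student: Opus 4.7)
The strategy is to reduce the $L_\infty$ minimisation to an instance of the generic $m$-variable problem (\ref{eq:function}) whose solution has already been recorded in the paragraph preceding the theorem. Starting from Lemma \ref{lem:difference-vector-m}, the $L_\infty$ norm $\|\vec{m}_b - \vec{m}_a\|_{L_\infty}$ is the maximum absolute value over three groups of coefficients of the decomposition: the free variables $\beta(B)$ for $\emptyset \subsetneq B \subsetneq A$; the fixed masses $m_b(C)$ for $C \not\subset A$, whose maximum is $k_2 \doteq \max_{C \not\subset A} m_b(C)$; and the coefficient on $\vec{m}_A$. Using the identity $1 - b(A) = \sum_{C \not\subset A} m_b(C) =: k_1 = pl_b(A^c)$, I rewrite the latter as $-\bigl(k_1 + \sum_{\emptyset \subsetneq B \subsetneq A} \beta(B)\bigr)$, producing exactly the expression (\ref{eq:linfnorm}). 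The minimisation is then of the form (\ref{eq:function}) in the $m-1 = 2^{|A|}-2$ variables $\{\beta(B)\}_{\emptyset \subsetneq B \subsetneq A}$, so $m = 2^{|A|}-1$ and the threshold $k_1/m$ becomes $(2^{|A|}-1)^{-1}\sum_{C \not\subset A} m_b(C)$ — precisely the dichotomy in the theorem statement.

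I then invoke the general analysis directly. In the regime $k_2 \geq k_1/m$, the argmin is a simplex with $m$ vertices: for each proper subset $\emptyset \subsetneq \bar{B} \subsetneq A$, one vertex has $\beta(B) = -k_2$ for $B \neq \bar{B}$ and $\beta(\bar{B}) = -k_1 + (m-1)k_2$, plus a ``uniform'' vertex with all $\beta(B) = -k_2$. To translate these back into basic probability assignments I use $m_a(B) = m_b(B) - \beta(B)$ for $\emptyset \subsetneq B \subsetneq A$ and then determine $m_a(A)$ from the normalisation $\sum_{B \subseteq A} m_a(B) = 1$. A short book-keeping computation, using $\sum_{\emptyset \subsetneq B \subsetneq A} m_b(B) = b(A) - m_b(A)$, gives $m_a(A) = m_b(A) + k_2$ at every ``regular'' vertex and $m_a(A) = m_b(A) + k_1 - (m-1)k_2$ at the uniform vertex. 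This matches (\ref{eq:vertices-linf-n-1}) once one recognises the uniform vertex as the entry $\bar{B} = A$ in the theorem's indexing; the uniform indexing $\bar{B} \subseteq A$ is therefore vindicated. In the opposite regime $k_2 < k_1/m$, the unique minimiser has $\beta(B) = -k_1/m$ for all proper non-empty $B$, yielding $m_a(B) = m_b(B) + k_1/m$ for every $B \subseteq A$ (including $B = A$, again by normalisation), which coincides with (\ref{eq:l2-solution}). The final assertion that this common solution is the barycenter of $\mathcal{M}_{L_\infty,A}[b]$ in the first regime follows by summing the $m$ vertex coordinates: for each fixed $B$, exactly one vertex contributes $-k_1 + (m-1)k_2$ and the remaining $m-1$ contribute $-k_2$, so the average collapses to $-k_1/m$ in every coordinate.

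The mathematical content has already been extracted in the generic lemma-style analysis of (\ref{eq:function}) preceding the theorem, so no hard convex-analysis step remains; the only genuine obstacle is bookkeeping. The subtlest point is the bridging between the abstract vertex description (where one vertex is ``special'' because all its coordinates coincide) and the theorem's symmetric indexing by $\bar{B} \subseteq A$: I must verify that the uniform vertex is precisely the mass vector obtained by plugging $\bar{B} = A$ into the regular vertex formula, which turns on the algebraic identity $m_a(A)|_{\bar B = A} = m_b(A) + k_1 - (m-1)k_2$ emerging from normalisation rather than directly from the $\beta$-coordinate description.
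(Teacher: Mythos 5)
Your proposal is correct and follows essentially the same route as the paper's own proof: reduce the norm via Lemma \ref{lem:difference-vector-m} to the generic form (\ref{eq:function}) with $k_1 = \sum_{C\not\subset A}m_b(C)$, $k_2 = \max_{C\not\subset A}m_b(C)$, $m = 2^{|A|}-1$, apply the recorded description of its minimisers in both regimes, translate back to mass coordinates, and average the vertices to get the barycenter. The only (harmless) difference is presentational: you recover $m_a(A)$ by normalisation and explicitly check that the ``uniform'' vertex is the $\bar B = A$ case of (\ref{eq:vertices-linf-n-1}), whereas the paper handles this implicitly through the dependent variable $\beta(A)$ and its compact indexing $\bar B \subseteq A$.
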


Note that, as (\ref{eq:vertices-linf-n-1}) is not guaranteed to be non-negative, the simplex of $L_\infty$ conditional belief functions in $\mathcal{M}$ does not necessarily fall entirely inside the conditioning simplex $\mathcal{M}_A$, i.e., it may include \emph{pseudo belief functions} \cite{smets02ijar}. 

Looking at (\ref{eq:vertices-linf-n-1}) we can observe that vertices are obtained by assigning the maximum mass not in the conditioning event to all its subsets indifferently. Normalisation is then achieved, in opposition to what happens in Dempster's rule, by \emph{subtracting} the total mass in excess of 1 in the specific component $\bar{B}$. This behavior is exhibited by other geometric conditional b.f. as shown in the following.

\section{A case study: the ternary frame} \label{sec:example-m}

If $|A| = 2$, $A = \{ x,y \}$, the conditional simplex is 2-dimensional, with three vertices $\vec{m}_x$, $\vec{m}_{y}$ and $\vec{m}_{x,y}$. 

For a belief function $b$ on $\Theta = \{x,y,z\}$ Theorem \ref{the:l1-m} states that the vertices of the simplex $\mathcal{M}_{L_1,A}$ of $L_1$ conditional belief functions in $\mathcal{M}$ are:
\[
\begin{array}{llllll}
\vec{m}[b]|_{L_1}^{\{x\}} \{x,y\} & = & \big [m_b(x) + pl_b(z), & m_b(y), & m_b(x,y) & \big ]', 
\\ 
\vec{m}[b]|_{L_1}^{\{y\}} \{x,y\} & = & \big [m_b(x) , & m_b(y) + pl_b(z), & m_b(x,y) & \big ]', 
\\ 
\vec{m}[b]|_{L_1}^{\{x,y\}} \{x,y\} & = & \big [m_b(x) , & m_b(y) , & m_b(x,y) + pl_b(z) & \big ]'.
\end{array}
\]
Figure \ref{fig:example-m} shows such simplex in the case of a belief function $b$ on the ternary frame $\Theta = \{ x,y,z \}$ and basic probability assignment
\begin{equation}\label{eq:exbf}
\vec{m} = [0.2, \; 0.3, \; 0, \; 0, \; 0.5, \; 0]',
\end{equation}
i.e., $m_b(x) = 0.2$, $m_b(y) = 0.3$, $m_b(x,z) = 0.5$.

\begin{figure}[ht!]
\begin{center}
\includegraphics[width = 0.8 \textwidth]{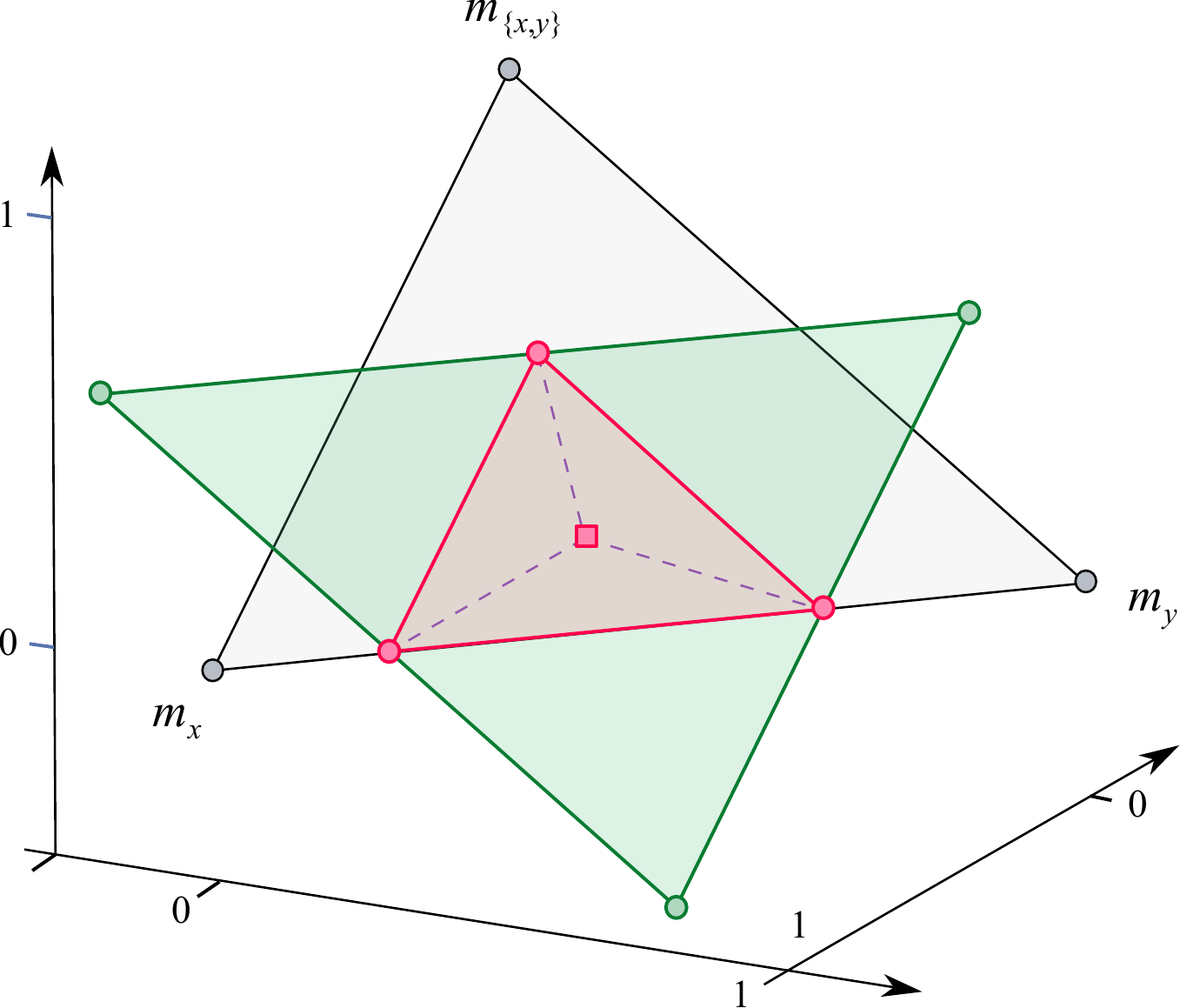}
\end{center}
\caption{The simplex (solid red triangle) of $L_1$ conditional belief functions in $\mathcal{M}$ associated with the belief function with mass assignment (\ref{eq:exbf}) in $\Theta = \{ x,y,z \}$. The related unique $L_2$ conditional belief function in $\mathcal{M}$ is plotted as a red square, and coincides with the center of mass of the $L_1$ set. The set of $L_\infty$ conditional (pseudo) belief functions is a simplex (triangle) with three vertices, depicted in green.}
\label{fig:example-m}
\end{figure}

In the case of the belief function (\ref{eq:exbf}) of the above example, by (\ref{eq:l2-solution}) its $L_2$ conditional belief function in $\mathcal{M}$ has b.p.a.
\begin{equation}\label{eq:l2-m-example}
\begin{array}{lll}
m (x) & = & \displaystyle m_b(x) + \frac{1 - b(x,y)}{3} = m_b(x) + \frac{pl_b(z)}{3}, 
\\
m (y) & = & \displaystyle m_b(y) + \frac{pl_b(z)}{3}, 
\\ 
m (x,y) & = & \displaystyle m_b(x,y) + \frac{pl_b(z)}{3}.
\end{array}
\end{equation}
Figure \ref{fig:example-m} visually confirms that such $L_2$ conditional belief function lies in the barycenter of the simplex of the related $L_1$ conditional b.f.s.

For what concerns $L_\infty$ conditional belief functions, the b.f. (\ref{eq:exbf}) is such that
\[
\begin{array}{lll}
\displaystyle \max_{C \not \subset A} m_b(C) & = & \displaystyle \max \Big \{ m_b(z), m_b(x,z), m_b(y,z), m_b(\Theta) \Big \} = m_b(x,z) \\ & = & \displaystyle 0.5 \geq \frac{1}{2^{|A|}-1} \sum_{C \not \subset A} m_b(C) = \frac{1}{3} m_b(x,z) = \frac{0.5}{3}.
\end{array}
\]
We hence fall within case 1, and there is a whole simplex of $L_\infty$ conditional belief function (in $\mathcal{M}$). According to Equation (\ref{eq:vertices-linf-n-1}) such simplex has $2^{|A|}-1 = 3$ vertices, namely (taking into account the nil masses in (\ref{eq:exbf}))
\begin{equation}\label{eq:linf-m-example}
\begin{array}{llllll}
\vec{m}[b]|_{L_\infty,\mathcal{M}}^{\{x\}} \{x,y\} & = & \big [m_b(x) - m_b(x,z) , & m_b(y) + m_b(x,z), & m_b(x,z) & \big ]', 
\\ 
\vec{m}[b]|_{L_\infty,\mathcal{M}}^{\{y\}} \{x,y\} & = & \big [ m_b(x) + m_b(x,z), & m_b(y) - m_b(x,z), & m_b(x,z) & \big]', 
\\ 
\vec{m}[b]|_{L_\infty,\mathcal{M}}^{\{x,y\}} \{x,y\} & = & \big [ m_b(x) + m_b(x,z), & m_b(y) + m_b(x,z), & - m_b(x,z) & \big ]'.
\end{array}
\end{equation}
We can notice that the set of $L_\infty$ conditional (pseudo) b.f.s is not entirely admissible, but its admissible part contains the set of $L_1$ conditional b.f.s, which amounts therefore a {more conservative} approach to conditioning. Indeed, the latter is the triangle inscribed in the former, determined by its median points.
Note also that both the $L_1$ and $L_\infty$ simplices have the same barycenter in the $L_2$ conditional belief function (\ref{eq:l2-m-example}).

\section{Discussion} \label{sec:discussion}

\subsection{Summary of results} \label{sec:discussion-summary}

To summarise, given a belief function $b:2^\Theta \rightarrow [0,1]$ and an arbitrary non-empty focal element $\emptyset \subsetneq A \subseteq \Theta$, we have the following.
\begin{enumerate}
\item
The set of $L_1$ conditional belief functions $b_{L_1,\mathcal{M}}(.|A)$ with respect to $A$ in $\mathcal{M}$ is the set of b.f.s with core in $A$ such that their mass dominates that of $b$ over all the subsets of $A$:
\[
b_{L_1,\mathcal{M}}(.|A) = \Big \{ a : \mathcal{C}_{a} \subseteq A, m_{a}(B) \geq m_b(B) \;\;\; \forall \emptyset \subsetneq B \subseteq A \Big \}.
\]
Such a set is a simplex $\mathcal{M}_{L_1,A}[b] = Cl(\vec{m}[b]|_{L_1}^B A, \emptyset \subsetneq B \subseteq A)$ whose vertices $\vec{m}_a = \vec{m}[b]|_{L_1}^B A$ have b.p.a.:
\[
\left \{
\begin{array}{l}
m_a(B) = m_b(B) + 1 - b(A) = m_b(B) + pl_b(A^c), \\ m_a(X) = m_b(X) \;\;\; \forall \emptyset \subsetneq X \subsetneq A, X\neq B .
\end{array}
\right.
\]
\item
The unique $L_2$ conditional belief function $b_{L_2,\mathcal{M}}(.|A)$ with respect to $A$ in $\mathcal{M}$ is the b.f. whose b.p.a. redistributes the mass $1-b(A) = pl_b(A^c)$ to each focal element $B\subseteq A$ in an equal way:
\begin{equation}\label{eq:l2-solution-m}
m_{L_2,\mathcal{M}}(B|A) = \displaystyle m_b(B) + \frac{pl_b(A^c)}{2^{|A|}-1},
\end{equation}
$\forall \emptyset \subsetneq B \subseteq A$, and corresponds to the center of mass of the simplex $\mathcal{M}_{L_1,A}[b]$ of $L_1$ conditional b.f.s.
\item
The $L_\infty$ conditional b.f. either coincides with the $L_2$ one, or forms a simplex obtained by assigning the maximal mass outside $A$ (rather than the sum of such masses $pl_b(A^c)$) to all subsets of $A$ (but one) indifferently.
\end{enumerate}

$L_1$ and $L_2$ conditioning are strictly related in the mass space, the latter being the barycenter of the former, and they have a compelling interpretation in terms of general imaging \cite{perea09amodel,Gardenfors}, as we argue next.

\subsection{Properties of geometric conditional belief functions} \label{sec:discussion-properties}

From our analysis a number of facts arise.
\begin{itemize}
\item
$L_p$ conditional belief functions, albeit obtained by minimising purely geometric distances, possess very simple and elegant interpretations in terms of degrees of belief.
\item
While some of them correspond to pointwise conditioning, some others form entire polytopes of solutions whose vertices also have simple interpretations.
\item
Conditional belief functions associated with the major $L_1$, $L_2$ and $L_\infty$ norms are strictly related to each other.
\item
In particular, while distinct, both the $L_1$ and $L_\infty$ simplices have barycenter in (or coincide with, in case 2) the $L_2$ conditional belief function.
\item
They are all characterized by the fact that, in the way they re-assign mass from focal elements $B \not \subset A$ not in $A$ to focal elements in $A$, they do not distinguish between subsets which have non-empty intersection with $A$ and those which have not.
\end{itemize}
The last point is quite interesting: mass-based geometric conditional b.f.s do not seem to care about the contribution focal elements make \emph{to the plausibility} of the conditioning event $A$, but only to whether they contribute or not to the \emph{degree of belief} of $A$. The reason is, roughly speaking, that in mass vectors $\vec{m}_b$ the mass of a given focal element appears only in the corresponding entry of $\vec{m}_b$. In opposition, belief vectors $\vec{b}$ are such that each entry 
\[
\vec{b}(B) = \sum_{X \subseteq B} m_b(X) 
\]
contains information about the mass of all the subsets of $B$. As a result, it could be expected that geometric conditioning \emph{in the belief space} $\mathcal{B}$ will see the mass redistribution process function in a manner linked to the contribution of each focal element to the plausibility of the conditioning event $A$. 

This is discussed in Section \ref{sec:discussion-belief}.

\subsection{Interpretation as general imaging} \label{sec:interpretation-imaging}

The form of geometric conditional belief functions in the mass space can be naturally interpreted in the framework of an interesting approach to belief revision, known as \emph{imaging} \cite{perea09amodel}. We will illustrate this notion and how it relates to our results using the example proposed in \cite{perea09amodel}. 

Suppose we briefly glimpse at a transparent urn filled with black or white balls, and are asked to assign a probability value to the possible `configurations' of the urn. Suppose also that we are given three options: 30 black balls and 30 white balls (state $a$); 30 black balls and 20 white balls (state $b$); 20 black balls and 20 white balls (state $c$). Hence, $\Theta = \{a,b,c\}$. Since the observation only gave us the vague impression of having seen approximately the same number of black and white balls, we would probably deem the states $a$ and $c$ equally likely, but at the same time we would tend to deem the event "$a$ or $c$" twice as likely as the state $b$. Hence, we assign probability 1/3 to each of the states. Now, we are told that state $c$ is false. How do we revise the probabilities of the two remaining states $a$ and $b$?

Lewis \cite{lewis76} argued that, upon observing that a certain state $x \in \Theta$ is impossible, we should transfer the probability originally allocated to $x$ to the remaining state deemed the `most similar' to $x$. In this case, $a$ is the state most similar to $c$, as they both consider an equal number of black and white balls. We obtain $(2/3,1/3)$ as probability values of $a$ and $b$, respectively. Peter G\"ardenfors further extended Lewis' idea (\emph{general imaging}) by allowing to transfer a part $\lambda$ of the probability 1/3, initially assigned to $c$, towards state $a$, and the remaining part $1-\lambda$ to state $b$. These fractions should be independent of the initial probabilistic state of belief.

Now, what happens when our state of belief is described by a belief function, and we are told that $A$ is true? In the general imaging framework we need to re-assign the mass $m(C)$ of each focal element not included in $A$ to all the focal elements $B \subseteq A$, according to some weights $\{ \lambda(B), B \subseteq A \}$. Suppose there is no reason to attribute larger weights to any focal element in $A$, as, for instance, we have no meaningful similarity measure (in the given context for the given problem) between the states described by two different focal elements. We can then proceed in two different ways. 
\\
One option is to represent our complete ignorance about the similarities between $C$ and each $B \subseteq A$ as a vacuous belief function on the set of weights. If applied to all the focal elements $C$ not included in $A$, this results in an entire polytope of revised belief functions, each associated with an arbitrary normalized weighting. It is not difficult to see that this coincides with the set $L_1$ conditional belief functions $b_{L_1,\mathcal{M}}(.|A)$ of Theorem \ref{the:l1-m}. On the other hand, we can represent the same ignorance as a uniform probability distribution on the set of weights $\{ \lambda(B), B \subseteq A \}$, for all $C \not\subset A$. Again, it is easy to see that general imaging produces in this case a single revised b.f., the $L_2$ conditional belief function $b_{L_2,\mathcal{M}}(.|A)$ of Theorem \ref{the:l2-m}.

As a final remark, the `information order independence' axiom of belief revision \cite{perea09amodel} states that the revised belief should not depend on the order in which the information is made available. In our case, the revised (conditional) b.f.s obtained by observing first an event $A$ and later another event $A'$ should be the same as the ones obtained by revising first with respect to $A'$ and then $A$. Both the $L_1$ and $L_2$ geometric conditioning operators presented here meet such axiom, supporting the case for their rationality.

\begin{figure}[ht!]
\begin{center}
\includegraphics[width = 0.8 \textwidth]{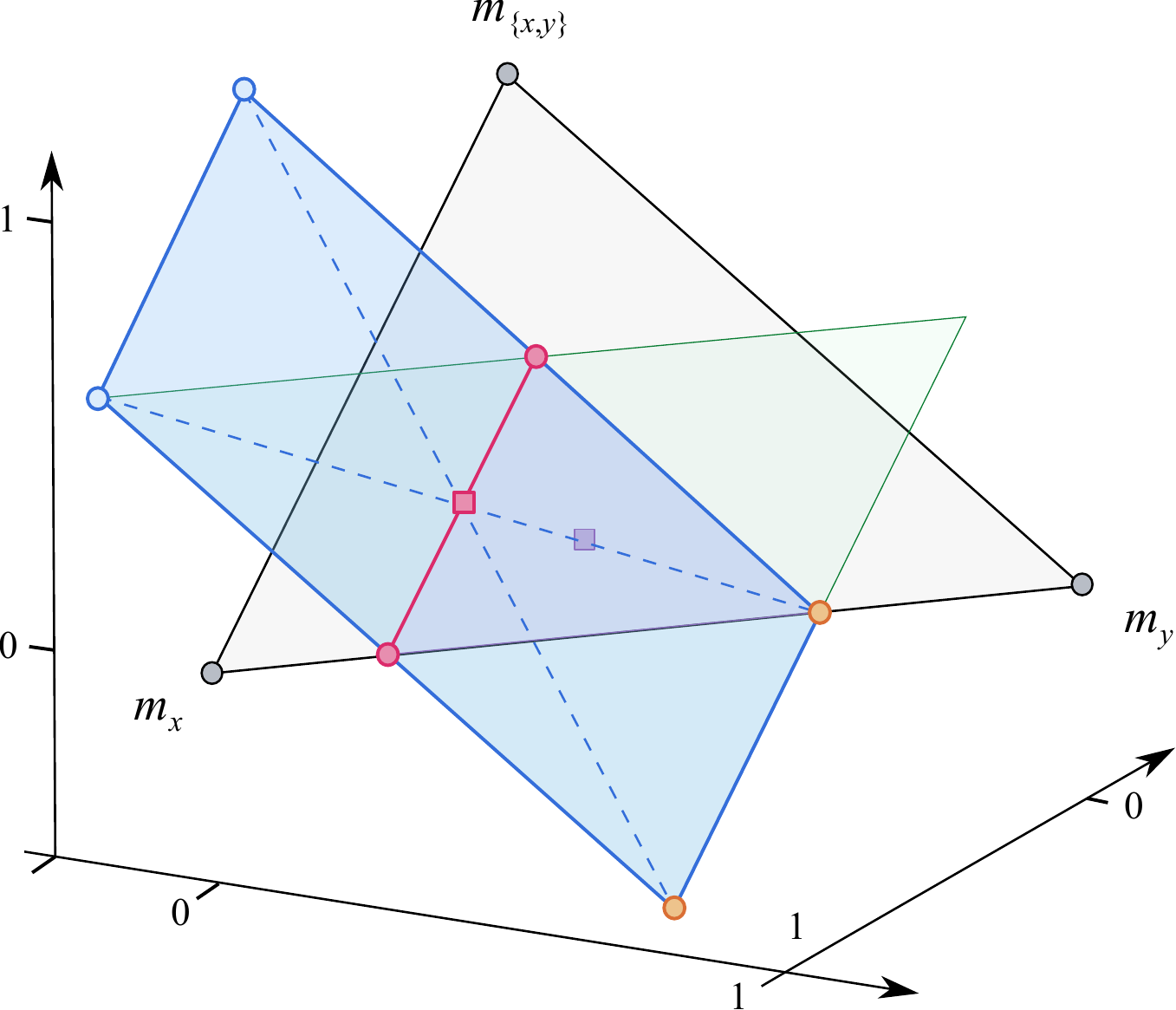}
\end{center}
\caption{
The set of $L_\infty,\mathcal{B}$ conditional belief functions is drawn here as a blue quadrangle for the belief function with mass assignment (\ref{eq:exbf}) in $\Theta = \{ x,y,z \}$, with conditioning event $A=\{x,y\}$. The region falls partly outside the conditioning simplex (gray triangle). The set of $L_1$ conditional belief functions in $\mathcal{B}$ is a line segment (in pink) with its barycentre in the $L_2$ conditional BF (pink square). In the ternary case, $L_2,\mathcal{B}$ is the barycentre of the $L_\infty,\mathcal{B}$ rectangle. 
The $L_p$ conditional belief functions in the mass space, already considered in Fig. \ref{fig:example-m}, are still visible in the background.
\label{fig:example-ternary-b}}
\end{figure}

\subsection{Comparison with conditioning in the belief space}  \label{sec:discussion-comparison}

\subsubsection{Results of conditioning in the belief space} \label{sec:discussion-belief}

Conditional belief functions can be derived by minimising Minkowski distances in the \emph{belief space} representation as well \cite{cuzzolin11isipta-conditional}. 
Unfortunately the results are of more difficult interpretation -- nevertheless a comparison with the more intuitive results obtained in the mass space is in place.

Namely \cite{cuzzolin11isipta-conditional,cuzzolin2021springer}, the $L_2$ conditional belief function and the barycentre $\overline{L_\infty}$ of the $L_\infty$ conditional belief functions computed in the belief space are, respectively,
\[
\begin{array}{lll}
\displaystyle 
m_{L_2,\mathcal{B}}(B|A) & = & \displaystyle m_b(B) + \sum_{C \subseteq A^c} m_b(B + C) 2^{-|C|} + (-1)^{|B|+1} \sum_{C \subseteq A^c} m_b (C) 2^{-|C|},
\\ \\
\displaystyle 
m_{\overline{L_\infty},\mathcal{B}}(B|A) & = & \displaystyle m_b (B) + \frac{1}{2} \sum_{\emptyset \subsetneq C \subseteq A^c} m_b (B + C) + \frac{1}{2} (-1)^{|B|+1} b (A^c).
\end{array}
\]

The $L_2$ result appears to be related to the process of mass redistribution (called by some authors \emph{specialisation}, \cite{Klawonn:1992:DBT:2074540.2074558,doi:10.1080/03081079008935126}) among all subsets, as happens with the ($L_2$-induced) orthogonal projection of a belief function onto the probability simplex \cite{cuzzolin07ecsqaru,cuzzolin07smcb}. In both expressions above, we can note that normalisation is achieved by alternately subtracting and summing a quantity, rather than via a ratio or, as in (\ref{eq:l2-solution-m}), by reassigning the mass of all $B \not \subset A$ to each $B \subsetneq A$ on an equal basis.

We can interpret the barycentre of the set of $L_\infty$ conditional belief functions, instead, as follows: the mass of all the subsets whose intersection with $A$ is $C \subsetneq A$ is re-assigned by the conditioning process {half to $C$}, and {half to $A$ itself}. In the case of $C=A$ itself, by normalisation, all the subsets $D \supseteq A$, including $A$, have their whole mass reassigned to $A$, consistently with the above interpretation. The mass $b(A^c)$ of the subsets which have no relation to the conditioning event $A$ is used to {guarantee the normalisation} of the resulting mass distribution. As a result, the mass function obtained is not necessarily non-negative: once again such a version of geometrical conditioning may generate pseudo-belief functions.

In the $L_1$ case obtaining a general analytic expression for the resulting conditional b.f.s appears impossible \cite{cuzzolin2021springer}. In the special cases in which this is possible, however, the result has potentially interesting interpretations. 


\subsubsection{Ternary example} \label{sec:discussion-belief-ternary}

Figure \ref{fig:example-ternary-b} illustrates the different geometric conditional belief functions as computed in the belief space, for a conditioning event $A = \{ x,y \}$ and a belief function with masses as in (\ref{eq:exbf}), i.e.
\[
m_b(x) = 0.2, \quad m_b (y) = 0.3, \quad m_b(x,z) = 0.5. 
\]

We already know that $m_{L_2,\mathcal{M}}(.|A)$ lies in the barycentre of the simplex of the $L_1, \mathcal{M}$ conditional belief functions. The same is true (at least in the ternary case) for $m_{L_2,\mathcal{B}}(.|A)$ (the pink square), which is the barycentre of the (blue) polytope of $m_{L_\infty,\mathcal{B}}(.|A)$ conditional belief functions. 
We can note that, as pointed out above, the latter does not fall entirely in the admissible conditional simplex $C(\vec{b}_x, \vec{b}_y, \vec{b}_{x,y})$ (although a significant portion does). Finding the admissible part of $m_{L_\infty, \mathcal{B}}(. | A)$ remains an open problem.

The set of $L_1$ conditional belief functions in $\mathcal{B}$ is, instead, a line segment (drawn in pink) whose barycentre is $m_{L_2,\mathcal{B}}(.|A)$. Such a set is:
\begin{itemize}
\item
entirely included in the set of $L_\infty$ approximations in both $\mathcal{B}$ and $\mathcal{M}$, thus representing a more conservative approach to conditioning;
\item 
entirely admissible.
\end{itemize}
It seems that, hard as it is to compute (see \cite{cuzzolin2021springer}, Chapter 15), $L_1$ conditioning in the belief space delivers interesting results. A number of interesting cross-relations between conditional belief functions in the two representation domains appear to exist:
\begin{enumerate}
\item
$m_{L_\infty,\mathcal{B}}(.|A)$ seems to contain $m_{L_1,\mathcal{M}}(.|A)$;
\item
the two $L_2$ conditional belief functions $m_{L_2,\mathcal{M}}(.|A)$ and $m_{L_2,\mathcal{B}}(.|A)$ appear to both lie on a line joining opposite vertices of $m_{L_\infty,\mathcal{B}}(.|A)$;
\item
$m_{L_\infty,\mathcal{B}}(.|A)$ (the blue polytope in Fig. \ref{fig:example-ternary-b}) and $m_{L_\infty,\mathcal{M}}(.|A)$ (the green triangle) have several vertices in common.
\end{enumerate}

There is probably more to these conditioning approaches than has been shown by the simple comparison done here. We will investigate these aspects further in the near future.

\section{Conclusions and perspectives} \label{sec:perspectives}

In this paper we showed how the notion of conditional belief function $b(.|A)$ can be introduced by geometric means, by projecting any belief function onto the simplex associated with the event $A$. The result will obviously depend on the choice of the vectorial representation for $b$, and of the distance function to minimize. We analyzed the case of conditioning a belief vector by means of the norms $L_1$, $L_2$ and $L_\infty$. This opens a number of interesting questions.

We may wonder, for instance, what classes of conditioning rules can be generated by such a distance minimization process. Do they span all known definitions of conditioning? In particular, is Dempster's conditioning itself a special case of geometric conditioning? We already mentioned Jousselme et al \cite{jousselme10brest} and their survey of the distance or similarity measures so far introduced between belief functions. Such a line of research could possibly be very useful in our quest.
A related question links geometric conditioning with combination rules \cite{yong2004combining}. Indeed, in the case of Dempster's rule it can be easily proven that \cite{cuzzolin04smcb},
\[
b \oplus b' = b \oplus \sum_{A \subseteq \Theta} m'(A) b_A = \sum_{A \subseteq \Theta} \mu(A) b \oplus b_A,
\]
where as usual $b'$ is decomposed as a convex combination of categorical belief functions $b_A$, and $\mu(A) \propto m'(A) pl_b(A)$. This means that Dempster's combination can be decomposed into a convex combination of Dempster's conditioning with respect to all possible events $A$. We can imagine to reverse this link, and generate combination rules from conditioning rules. Additional constraints have to be imposed in order to obtain a unique result. For instance, by imposing commutativity with affine combination (\emph{linearity}, in Smets' terminology \cite{smets95axiomatic}), any (geometrical) conditioning rule $b|^{\uplus}_A$ implies:
\[
b \uplus b' = \sum_{A \subseteq \Theta} m'(A) b \uplus b_A = \sum_{A \subseteq \Theta} m'(A) b|^{\uplus}_A.
\]

In the near future we plan to explore the world of combination rules induced by conditioning rules, starting from the different geometrical conditional processes introduced here.

\section*{Appendix}

\subsection*{Proof of Lemma \ref{lem:difference-vector-m}}

By definition 
\[
\vec{m}_b - \vec{m}_a = \sum_{\emptyset \subsetneq B \subseteq \Theta} m_b(B) \vec{m}_B - \sum_{\emptyset \subsetneq B \subseteq A} m_a(B) \vec{m}_B.
\]
The change of variables $\beta(B) \doteq m_b(B) - m_a(B)$ further yields:
\begin{equation} \label{eq:dev-beta}
\vec{m}_b - \vec{m}_a = \sum_{\emptyset \subsetneq B \subseteq A} \beta(B) \vec{m}_B + \sum_{B \not\subset A} m_b(B) \vec{m}_B.
\end{equation}
We observe, though, that the variables $\{ \beta(B), \emptyset \subsetneq B \subseteq A \}$ are not all independent. Indeed:
\[
\sum_{\emptyset \subsetneq B \subseteq A} \beta(B) = \sum_{\emptyset \subsetneq B \subseteq A} m_b(B) - \sum_{\emptyset \subsetneq B \subseteq A} m_a(B) = b(A) - 1
\]
as $\sum_{\emptyset \subsetneq B \subseteq A} m_a(B) = 1$ by definition, since $\vec{m}_a \in \mathcal{M}_A$. As a consequence, in the optimization problem (\ref{eq:conditional-l1}) there are just $2^{|A|}-2$ independent variables (as $\emptyset$ is not included), while 
\[
\beta(A) = b(A) - 1 - \sum_{\emptyset \subsetneq B \subsetneq A} \beta(B). 
\]
By replacing the above equality into (\ref{eq:dev-beta}) we get Equation (\ref{eq:dev-beta2}).

\subsection*{Proof of Theorem \ref{the:l1-m}}

The minima of the $L_1$ norm (\ref{eq:l1-norm}) are given by the set of constraints:
\begin{equation}\label{eq:constraints-l1}
\left \{ \begin{array}{ll} \beta(B) \leq 0 & \forall \emptyset \subsetneq B \subsetneq A \\ \displaystyle \sum_{\emptyset \subsetneq B \subsetneq A} \beta(B) \geq b(A) - 1. & \end{array} \right.
\end{equation}
In the original simplicial coordinates $\{ m_a(B), \emptyset \subsetneq B \subseteq A \}$ of the candidate solution $\vec{m}_a$ in $\mathcal{M}_A$ such system reads as:
\[
\left \{ \displaystyle m_b(B) - m_a(B) \leq 0 \;\; \forall \emptyset \subsetneq B \subsetneq A; \;\;\; \displaystyle \sum_{\emptyset \subsetneq B \subsetneq A} ( m_b(B) - m_a(B) ) \geq b(A) - 1, \right.
\]
i.e., $m_a(B) \geq m_b(B)$ $\forall \emptyset \subsetneq B \subseteq A$.

\subsection*{Proof of Theorem \ref{the:l1-simplex-m}}

By Equation (\ref{eq:constraints-l1}), the $2^{|A|}-2$ vertices of the simplex of $L_1$ conditional belief function in $\mathcal{M}$ (denoted by $\vec{m}[b]|_{L_1}^B A$, where $\emptyset \subsetneq B \subseteq A$) are determined by the following solutions:
\[
\begin{array}{l}
\vec{m}[b]|_{L_1}^A A : \left \{ \begin{array}{lll} \beta(X) & = & 0 \;\;\; \forall \emptyset \subsetneq X \subsetneq A, \end{array} \right.
\\
\vec{m}[b]|_{L_1}^B A : \left \{ \begin{array}{llll} \beta(B) & = & b(A) - 1, & \\ \beta(X) & = & 0 & \forall \emptyset \subsetneq X \subsetneq A, X\neq B. \end{array} \right. \forall \emptyset \subsetneq B \subsetneq A
\end{array}
\]
In coordinates $\{ m_a(B) \}$ the vertex $\vec{m}[b]|_{L_1}^B A$ is the vector $\vec{m}_a \in \mathcal{M}_A$ meeting Equation (\ref{eq:l1-vertices}).

\subsection*{Proof of Theorem \ref{the:l2-m}}

In the case that concerns us, $\vec{p} = \vec{m}_b$ is the original mass function, $\vec{q} = \vec{m}_a$ is an arbitrary point in $\mathcal{M}_A$, while the generators of $\mathcal{M}_A$ are all the vectors $\vec{g}_B = \vec{m}_B - \vec{m}_A$, $\forall \emptyset \subsetneq B \subsetneq A$. Such generators are vectors of the form
\[
[0,\cdots,0,1,0,\cdots,0,-1,0,\cdots,0]'
\]
with all zero entries but entry $B$ (equal to 1) and entry $A$ (equal to -1). Making use of Equation (\ref{eq:dev-beta}), the condition $\langle \vec{m}_b - \vec{m}_a, \vec{m}_B - \vec{m}_A \rangle = 0$ assumes then a very simple form
\[
\beta(B) - b(A) + 1 + \sum_{\emptyset \subsetneq X \subsetneq A, X \neq B} \beta(X) = 0
\]
for all possible generators of $\mathcal{M}_A$, i.e.:
\begin{equation} \label{eq:constraints-l2}
2 \beta(B) + \sum_{\emptyset \subsetneq X \subsetneq A, X \neq B} \beta(X) = b(A) - 1 \;\;\; \forall \emptyset \subsetneq B \subsetneq A.
\end{equation}
System (\ref{eq:constraints-l2}) is a linear system of $2^{|A|}-2$ equations in $2^{|A|}-2$ variables (the $\beta(X)$), that can be written as $\mathcal{A} \vec{\beta} = (b(A) - 1) \vec{1}$, where $\vec{1}$ is the vector of the appropriate size with all entries at 1. Its unique solution is trivially $\vec{\beta} = (b(A)-1) \cdot \mathcal{A}^{-1} \vec{1} $. The matrix $\mathcal{A}$ and its inverse are
\[
\begin{array}{cc}
\mathcal{A} = \left [ \begin{array}{cccc} 2 & 1 & \cdots & 1 \\ 1 & 2 & \cdots & 1 \\ & & \cdots & \\ 1 & 1 & \cdots & 2 \end{array} \right ].
&
\mathcal{A}^{-1} = \frac{1}{d+1} \left [ \begin{array}{cccc} d & -1 & \cdots & -1 \\ -1 & d & \cdots & -1 \\ & & \cdots & \\ -1 & -1 & \cdots & d \end{array} \right ],
\end{array}
\]
where $d$ is the number of rows (or columns) of $\mathcal{A}$. It is easy to see that $\mathcal{A}^{-1} \vec{1} = \frac{1}{d+1} \vec{1}$, where in our case $d = 2^{|A|}-2$. The solution to (\ref{eq:constraints-l2}) is then
\[
\vec{\beta} = \mathcal{A}^{-1} \vec{1} \cdot (b(A)-1) = \frac{1}{2^{|A|}-1} \vec{1} (b(A) - 1),
\]
or, more explicitly, 
\[
\beta(B) = \frac{b(A)-1}{2^{|A|}-1}
\quad
\forall
\emptyset \subsetneq B \subsetneq A.
\]
In the $\{ m_a(B) \}$ coordinates the $L_2$ conditional belief function reads as
\[
m_a(B) = m_b(B) + \frac{1 - b(A)}{2^{|A|}-1} = m_b(B) + \frac{pl_b(A^c)}{2^{|A|}-1} \hspace{5mm} \forall \emptyset \subsetneq B \subseteq A.
\]

\subsection*{Proof of Theorem \ref{the:linf-m}}

For the norm (\ref{eq:linfnorm}) the condition $k_2 \geq k_1/m$ for functions of the form (\ref{eq:function}) reads as:
\begin{equation}\label{eq:case-1}
\max_{C \not \subset A} m_b(C) \geq \frac{1}{2^{|A|}-1} \sum_{C \not \subset A} m_b(C).
\end{equation}
In such a case the set of $L_\infty$ conditional belief functions is given by the constraints $x_i \geq -k_2$, $\sum_i x_i \leq k_2 - k_1$, i.e.,
\[
\left \{ \displaystyle \beta(B) \geq - \max_{C \not \subset A} m_b(C) \;\; \forall B \subsetneq A, \;\;\; \displaystyle \sum_{B \subsetneq A} \beta(B) \leq \max_{C \not \subset A} m_b(C) - \sum_{C \not \subset A} m_b(C). \right.
\]
This is a simplex $Cl(\vec{m}[b]|^{L_\infty}_{\bar{B}} A, \bar{B} \subseteq A)$, where each vertex $\vec{m}[b]|^{L_\infty}_{\bar{B}} A$ is characterized by the following values $\vec{\beta}_{\bar{B}}$ of the auxiliary variables:
\[
\left \{ 
\begin{array}{l}
\displaystyle \vec{\beta}_{\bar{B}}(B) = - \max_{C \not \subset A} m_b(C) \;\; \forall B \subseteq A, B \neq \bar{B}; 
\\ 
\displaystyle \vec{\beta}_{\bar{B}}(\bar{B}) = - \sum_{C \not \subset A} m_b(C) + (2^{|A|}-2) \max_{C \not \subset A} m_b(C) 
\end{array}
\right.
\]
or, in terms of their basic probability assignments, (\ref{eq:vertices-linf-n-1}).

The barycenter of this simplex can be computed as follows:
\[
\begin{array}{lll} 
m_{\overline{L_\infty},\mathcal{M}}(B|A) 
& = & 
\displaystyle
\frac{1}{2^{|A|}-1} 
\sum_{\bar{B} \subseteq A} \vec{m}[b]|^{L_\infty}_{\bar{B}} (B|A)
\\ 
& = & \displaystyle
\frac{1}{2^{|A|}-1} 
\bigg [
\displaystyle (2^{|A|}-1) m_{b}(B) + (2^{|A|}-2)\max_{C \not \subset A} m_b(C) 
\\
& & \displaystyle
+ \sum_{C \not \subset A} m_b(C) - (2^{|A|}-2) \max_{C \not \subset A} m_b(C)
\bigg ]
\\ 
& = &
\displaystyle 
\frac{\displaystyle (2^{|A|}-1) m_{b}(B) + \sum_{C \not \subset A} m_b(C)}{2^{|A|}-1} = m_b(B) +  \frac{\displaystyle \sum_{C \not \subset A} m_b(C)}{2^{|A|}-1},
\end{array}
\]
i.e., the $L_2$ conditional belief function (\ref{eq:l2-solution}). The corresponding minimal $L_\infty$ norm of the difference vector is, according to (\ref{eq:linfnorm}), equal to $\max_{C \not \subset A} m_b(C)$.\\
The opposite case reads as
\begin{equation}\label{eq:case-2}
\max_{C \not \subset A} m_b(C) < \frac{1}{2^{|A|}-1} \sum_{C \not \subset A} m_b(C).
\end{equation}
For system (\ref{eq:linfnorm}) the unique solution is 
\[
\beta(B) = -\frac{1}{2^{|A|}-1} \sum_{C \not \subset A} m_b(C)
\quad
\forall B \subsetneq A 
\]
 or, in terms of basic probability assignments,
\[
m_{{L_\infty},\mathcal{M}}(B|A) = m_b(B) + \frac{1}{2^{|A|}-1} \sum_{C \not \subset A} m_b(C) \;\;\; \forall B \subseteq A.
\]
The corresponding minimal $L_\infty$ norm of the difference vector is in this second case equal to 
\[
\frac{1}{2^{|A|}-1} \sum_{C \not \subset A} m_b(C).
\]

\footnotesize
\bibliographystyle{plain}
\bibliography{arxiv-geometric-conditioning}

\end{document}